  \ifnum\value{page}>1 %
\newcommand{\norm}[1]{\left\|#1\right\|}
\newcommandx{\unsure}[2][1=]{\todo[inline,linecolor=red,backgroundcolor=red!25,bordercolor=red,#1]{#2}}
\newcommandx{\change}[2][1=]{\todo[linecolor=blue,backgroundcolor=blue!25,bordercolor=blue,#1]{#2}}
\newcommandx{\info}[2][1=]{\todo[linecolor=OliveGreen,inline,backgroundcolor=OliveGreen!25,bordercolor=OliveGreen,#1]{#2}}
\newcommandx{\improvement}[2][1=]{\todo[linecolor=Plum,inline,backgroundcolor=Plum!25,bordercolor=Plum,#1]{#2}}
\newtheorem{thm}{Theorem}
\newtheorem{lem}{Lemma}
\newtheorem{cor}{Corollary}
\newtheorem{defn}{Definition}
\newtheorem{ass}{Assumption}
\newcounter{subthm}
\let\savedc@thm\c@hyp
\newcommand{\normhyp}{%
  \let\c@hyp\savedc@hyp 
  \renewcommand\thehyp{\arabic{hyp}}%
} 
\newcounter{subass}
\let\savedc@ass\c@hyp
\newcommand\tagthis{\addtocounter{equation}{1}\tag{\theequation}}
\DeclareMathOperator{\Ocal}{\mathcal{O}}
\newcommand{\m}[1]{{\bf{#1}}}
\newcommand{\tr}{^{\sf T}}
\newcommand{\C}[1]{{\cal {#1}}}
\begin{document}

\title{\fontsize{20}{20}\selectfont When Does Stochastic Gradient Algorithm Work Well?}

\author{
Lam M. Nguyen
\and
Nam H. Nguyen
\and
Dzung T. Phan
\and
Jayant R. Kalagnanam
\and
Katya Scheinberg
}

\maketitle

\unmarkedfntext{\textbf{Lam M. Nguyen}, Industrial and Systems Engineering, Lehigh University, PA, USA. Email: \href{mailto:lamnguyen.mltd@gmail.com}{LamNguyen.MLTD@gmail.com}}
\unmarkedfntext{\textbf{Nam H. Nguyen}, IBM Thomas J. Watson Research Center, Yorktown Heights, NY, USA. Email: \href{mailto:nnguyen@us.ibm.com}{nnguyen@us.ibm.com}}
\unmarkedfntext{\textbf{Dzung T. Phan}, IBM Thomas J. Watson Research Center, Yorktown Heights, NY, USA. Email: \href{mailto:phandu@us.ibm.com}{phandu@us.ibm.com}}
\unmarkedfntext{\textbf{Jayant R. Kalagnanam}, IBM Thomas J. Watson Research Center, Yorktown Heights, NY, USA. Email: \href{mailto:jayant@us.ibm.com}{jayant@us.ibm.com}}
\unmarkedfntext{\textbf{Katya Scheinberg}, Industrial and Systems Engineering, Lehigh University, PA, USA. Email: \href{mailto:katyas@lehigh.edu}{katyas@lehigh.edu}. The work of this author is partially supported by NSF Grants CCF 16-18717 and 
CCF 17-40796}

\begin{abstract}
In this paper, we consider a general stochastic optimization problem which is often at the core of
supervised learning, such as deep learning and linear classification.  We consider a standard stochastic
gradient descent (SGD) method with a fixed, large step size and propose a novel assumption on the objective function,
under which this method has the improved convergence rates (to a neighborhood of the optimal solutions). We then empirically demonstrate that these assumptions hold for logistic regression and standard  deep neural networks on classical data sets. Thus our analysis helps to explain when efficient behavior can be expected from the SGD method in training classification models and deep neural networks.
\end{abstract}

\section{Introduction and Motivation}\label{intro}

In this paper we are interested in analyzing behavior of the stochastic gradient algorithm when solving empirical and expected risk minimization problems.
For the sake of generality we  consider the following stochastic optimization problem
\begin{align*}
\min_{\m{w} \in \mathbb{R}^d} \left\{ F(\m{w}) = \mathbb{E} [ f(\m{w};\xi) ] \right\}, \tagthis \label{main_prob_expected_risk}
\end{align*}
where $\xi$ is a random variable  obeying some distribution.

 In the case of empirical risk minimization with a training set $\{(\m{x}_i,\m{y}_i)\}_{i=1}^n$, $\xi_i$ is a random variable that is defined
 by a single random sample  $(\m{x},\m{y})$  drawn uniformly from the training set.
Then,  by defining  $f_i(\m{w}) := f(\m{w};\xi_{i})$ we write the empirical risk minimization as follows:
\begin{align*}
\min_{\m{w} \in \mathbb{R}^d} \left\{ F(\m{w}) = \frac{1}{n}
\sum_{i=1}^n f_i(\m{w}) \right\}. \tagthis \label{main_prob_empirical_risk}
\end{align*}

More generally $\xi$ can be a random variable defined by a random subset of samples $\{(\m{x}_i,\m{y}_i)\}_{i \in I}$ drawn from the training set, in which case formulation  \eqref{main_prob_expected_risk} still applies to the empirical risk minimization. On the other hand, if $\xi$ represents a sample or a set of samples
drawn from the data distribution, then \eqref{main_prob_expected_risk} represents the expected risk minimization.


Stochastic gradient descent (SGD), originally introduced in
\cite{RM1951}, has become the method of choice for solving not only \eqref{main_prob_expected_risk} but also
\eqref{main_prob_empirical_risk} when $n$ is large. Theoretical justification for
using  SGD for machine learning problems is given, for example, in
\cite{BousquetBottou}, where it is shown that, at least for convex
problem, SGD is an optimal method for minimizing expected risk,
which is the ultimate goal of learning. From the practical
perspective SGD is often preferred to the standard gradient descent
(GD) method simply because GD requires computation of a full
gradient on each iteration, which, for example, in the case of deep neural networks (DNN),
requires applying backpropagation for all $n$ samples, which can be
prohibitive.

Consequently, due to its simplicity in implementation and efficiency in dealing
with large scale datasets, SGD has become by far the most common method for training deep neural networks and other large scale ML models.
However, it is well  known that SGD can be  slow and unreliable in some practical
applications as its behavior is strongly dependent on the chosen stepsize and on the variance of the stochastic gradients.
While the method may provide fast initial improvement, it may
slow down drastically after a few epochs and can even fail to move close enough to a
solution for a fixed learning rate. To overcome this oscillatory
behavior, several variants of SGD have been recently proposed. For example, methods such
as AdaGrad \cite{AdaGrad}, RMSProp \cite{tielemanH12}, and Adam
\cite{KingmaB14}  adaptively select the
stepsize for each component of $\m{w}$.
Other techniques include  diminishing stepsize scheme
\cite{bottou2016optimization} and variance reduction methods
\cite{SAGjournal, SAGA, SVRG,nguyen2017sarah}.  These latter methods  reduce the
 variance of the stochastic gradient
estimates, by either computing  a full gradient after a certain number of
iterations or by storing the past gradients, both of which can be expensive. Moreover, these methods only apply to the finite sum problem
(\ref{main_prob_empirical_risk})  but not the general problem (\ref{main_prob_expected_risk}).
On the other hand these methods
enjoy faster convergence rates than that of SGD. For example, when
$F(\m{w})$ is strongly convex, convergence rates of the variance
reduction methods (as well as that of GD itself) are linear, while for
SGD it is only sublinear. While GD has to compute the entire
gradient on {\em every} iteration, which makes it more
expensive than the variance reduction methods,  its convergence
analysis allows for a much larger fixed stepsizes than those allowed in
the variance reduction methods.
In this paper we are particularly interested in addressing an observation: a simple SGD with a fixed,
reasonably large, step size can  have a fast convergence rate to some neighborhood of the optimal solutions,
 without resorting to additional procedures for variance
reduction.

Let us consider an example of recovering a signal $\hat{\m{w}} \in \mathbb{R}^2$ from $n$ noisy observations $y_i =  y_i^{\text{clean}} + e_i$ where $y_i^{\text{clean}} = (\m{a}_i\tr \hat{\m{w}})^2$. Here, $\m{a}_i$'s are random vectors and $e_i$'s are noise components. To recover $\hat{\m{w}}$ from the observation vector $\m{y}$, we solve a non-convex fourth-order polynomial minimization problem
$$
\min_{\m{w}} \left\{ F(\m{w}) =
\frac{1}{n} \sum_{i=1}^n (y_i - (\m{a}_i\tr\m{w})^2)^2 \right\}.
$$
Note that there are at least two global solutions to this problem, which we denote $w_*$ and $-w_*$.
We consider two possible scenarios:

\begin{itemize}
\item[(i)] \textit{All} of the component functions $f_i(\m{w})=(y_i - (\m{a}_i\tr\m{w})^2)^2$ have \textit{relatively small} gradients
at both of the optimal solutions $\m{w}_*$ and $-\m{w}_*$ of the aggregate $F(\m{w})$. In this case this means that $w_*$ recovers a good fit for the observations $y$.
\item[(ii)] There are \textit{many} indices $i$ such that at the optimal solutions of $F(\m{w})$, the associated gradients $\nabla f_i$ are \textit{ large}. This happens when $w_*$ does not provide a good fit, which can happen when the noise $e_i$ is large.
\end{itemize}

\begin{figure}[h]
 \begin{center}
 \includegraphics[width=0.49\textwidth]{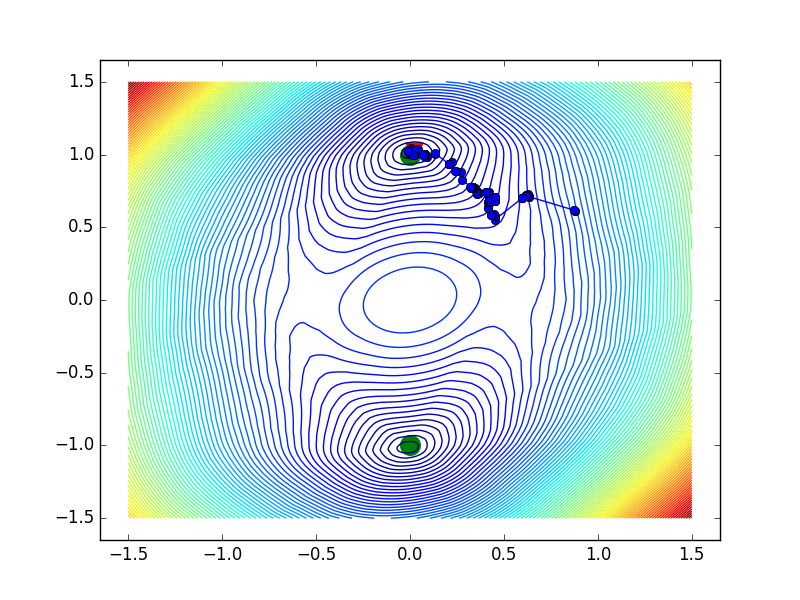}
 \includegraphics[width=0.49\textwidth]{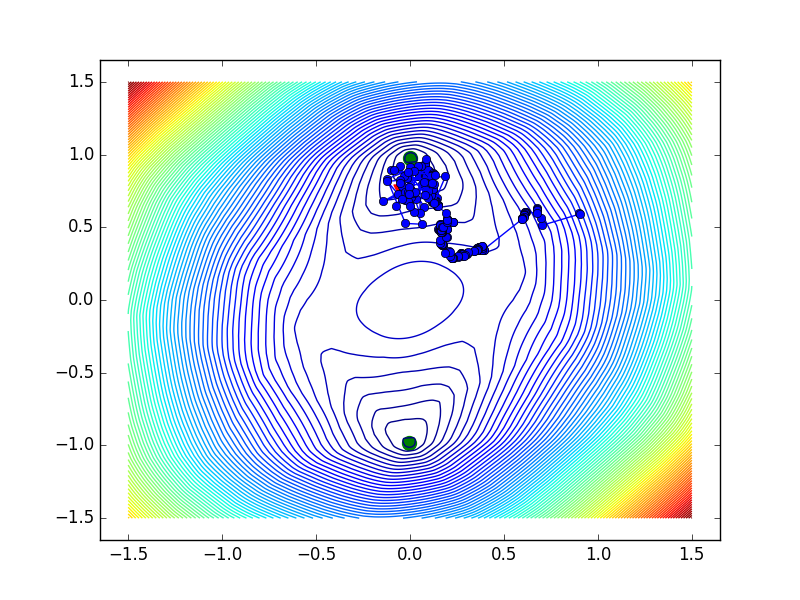}
  \caption{Stochastic Gradient Descent}
  \label{fig:SGD_sample}
  \end{center}
 \end{figure}

We set $n=100$ and generate these two scenarios by setting all the noise components $e_i$ to be  small ($1 \%$ of the energy of $\m{y}^{\text{clean}}$) for case (i) or setting only first $40$ noise components to be large ($25 \%$ of the energy of $\m{y}^{\text{clean}}$) for case (ii). We can observe from Figure \ref{fig:SGD_sample} that SGD algorithm
 converges to the optimal solution of $F(\m{w})$ in  case (i) depicted in
the left figure; but fails to converge  to the solution of $F$ in case
(ii) as shown in the right figure.
The intuition behind this  behavior is as follows. At every step of SGD, the
 iterate  moves towards to the optimal solutions of the
individual component function that has been randomly chosen on this
iteration. If  a majority of component functions $f_i$ have their
optimal solutions close to the optimum of the entire problem $F$,
then SGD effectively acts as GD. On the other hand,  if the optimal
solutions of a lot of  $f_i's $ are far from each other and from the
overall optimum, then iterates of SGD wander randomly in the region
around these individual optima, as shown on the right of Figure
\ref{fig:SGD_sample}. Hence, SGD cannot work effectively in case
(ii), unless we either reduce the learning rate or reduce the
variance of the steps thus attaining more accurate gradient
information.


In this paper we generalize this result for stochastic problem \eqref{main_prob_expected_risk} under much weaker assumptions. In particular, we do not assume that the gradients {\em vanish} at the solution,
but that they are bounded by some small constant. Moreover, we do not impose this property on {\em all} stochastic gradients, but assume that it holds with suitably large probability. We then show that SGD has fast convergence rates in the strongly convex, convex and nonconvex cases, until some accuracy is reached, where this accuracy is dictated by the behavior of the stochastic gradients at the optimal solution.

We conjecture that  success of SGD for training many machine learning models
 is the result of the associated optimization
problems having this properties - most of the component gradients are suitably small at the solution. To verify this claim, we trained
 linear classifiers (via logistic regression) and  standard  neural
networks on several well-known  datasets and subsequently computed
the fraction of individual gradients  $\nabla f_i(\m{w}_*)$
 at the final solution $\m{w}_*$ of $F$, that were small. The results show that more than 99\% of component functions $f_i$
have the vanishing gradient at $\m{w}_*$. More numerical evidence is presented in the Section \ref{sec_experiment}.

Hence we base our analysis on the following observation.

\textbf{Main observation}. \textit{For many classification problems in supervised learning, majority of component functions $f_i$ have
small gradients at the optimal solution $w_*$ (in the convex case) or at  local minima of $F(\m{w})$ (in the nonconvex case)}

In this paper, based on this observation, we  provide theoretical
analysis of SGD under the assumption on the fraction of components with small gradient at the solution.
Our analysis helps explain the good performance of SGD when applied to deep
learning. We summarize the key contributions of the paper as
follows.



\begin{itemize}
\item We conjecture that in many instances of  empirical risk minimization and expected risk minimization
SGD converges  to a neighborhood of a stationary point of $F(\m{w})$ such that
the majority of component functions $f_i$ have small gradients at that point. We
 verify numerically that this conjecture holds true for logistic
regression and standard deep neural networks on a wide range of data
sets.
\item We formalize this conjecture as a condition under which we  are able to
establish improved convergence rates of SGD with fixed, large step size to a neighborhood of such stationary point when $F(\m{w})$ is strongly convex, convex and nonconvex.

\item Thus we establish that SGD converges fast  to a neighborhood of the expected/empirical risk minimizer and that the size of the neighborhood
is determined by some properties of the distribution of the stochastic gradient at the minimizer.
\end{itemize}

The remainder of the paper is organized as follows. The main convergence analysis for all three cases is carried out in Section \ref{sec_conj_evidence}.
The computational evidence is presented in Section \ref{sec_experiment} and implications of our analysis and findings are summarized in Section \ref{sec_conclusion}. The proofs are presented in the Appendix.

\section{Convergence Analyses of Stochastic Gradient Algorithms}
\label{sec_conj_evidence}


In this section, we analyze  the stochastic gradient descent algorithm (SGD) under a novel condition, based on the observations of the previous section, and derive improved convergence rates
  for the strongly convex, convex, and non-convex cases. We present each result
  in the form of a general theorem with the bound on a certain optimality measure (depending on the case), followed by the corollary
   where we  demonstrate that improved convergence rate can be observed until this optimality measure becomes small. The rate and the threshold for optimality measure are dictated by the properties of the stochastic gradient at the solution.

  First we introduce the basic definition of
$L$-smoothness.

\begin{defn}
\label{def:L-smooth}
A function $\phi$ is $L$-smooth if there exists a constant $L > 0$ such
that
\begin{align*}
\| \nabla \phi(\m{w}) - \nabla \phi(\m{w}') \| \leq L \| \m{w} - \m{w}'
\|, \ \forall\; \m{w},\m{w}' \in \mathbb{R}^d.
\tagthis\label{eq:Lsmooth_basic}
\end{align*}
\end{defn}

For completeness, we state the SGD algorithm as Algorithm \ref{sgd_algorithm}.

\begin{algorithm}[h]
   \caption{Stochastic Gradient Descent (SGD) Algorithm with fixed step size }
   \label{sgd_algorithm}
\begin{algorithmic}
   \STATE {\bfseries Initialize} $\m{w}_0$, {\bfseries choose} stepsize $\eta>0$, and batch size $b$.
   \FOR{$t=0,1,2,\dots$}
  \STATE Generate random variables $\left\{\xi_{t,i}\right\}_{i=1}^b$ i.i.d. with $\mathbb{E}[\nabla f(\m{w}_t;\xi_{t,i}) | \mathcal{F}_t] = \nabla F(\m{w}_t)$.
  \STATE Compute a stochastic gradient $$\m{g}_t= \frac{1}{b} \sum_{i=1}^b \nabla f(\m{w}_{t};\xi_{t,i}).$$
   \STATE Update the new iterate $\m{w}_{t+1} = \m{w}_{t} - \eta \m{g}_t$.
   \ENDFOR
\end{algorithmic}
\end{algorithm}



Let $\mathcal{F}_t =
\sigma(\m{w}_0,\m{w}_1,\dots,\m{w}_t)$ be the
$\sigma$-algebra generated by $\m{w}_0,\m{w}_1,\dots,\m{w}_t$. We note that $\left\{\xi_{t,i}\right\}_{i=1}^b$ are independent of $\mathcal{F}_t$. Since $\left\{\xi_{t,i}\right\}_{i=1}^b$ are i.i.d.\footnote{Independent and identically distributed random variables. We note from probability theory that if $X_1,\dots,X_d$ are i.i.d. random variables then $g(X_1),\dots,g(X_d)$ are also i.i.d. random variables if $g$ is measurable function.} with $\mathbb{E}[\nabla f(\m{w}_t;\xi_{t,i}) | \mathcal{F}_t] = \nabla F(\m{w}_t)$, we have an unbiased estimate of gradient $\mathbb{E}[ \m{g}_t | \mathcal{F}_t ] = \frac{1}{b} \sum_{i=1}^b \nabla F(\m{w}_t)   = \nabla F(\m{w}_t)$.

We now define the quantities  that will be useful in our  results.


\begin{defn}
\label{defn_p} Let $\m{w}_*$ be a stationary point of the objective
function $F(\m{w})$. For any given threshold $\epsilon > 0$, define
\begin{align*}
p_{\epsilon} & := \mathbb{P}\left\{ \norm{ \m{g}_* }^2
\leq \epsilon  \right\}, \tagthis \label{eq_Prob_p}
\end{align*}
where $\m{g}_* = \frac{1}{b} \sum_{i=1}^b \nabla f (\m{w}_*; \xi_i)$, as the probability that event $\norm{ \m{g}_* }^2
\leq \epsilon$ happens for some i.i.d. random variables $\{\xi_i\}_{i=1}^b$. We also define
\begin{align*}
M_{\epsilon} := \mathbb{E} \left[ \| \m{g}_* \|^2 \ | \ \| \m{g}_* \|^2 > \epsilon \right]. \tagthis \label{eq_upper_bound}
\end{align*}
\end{defn}

The quantity $p_{\epsilon}$ measures the probability that event $\norm{ \m{g}_* }^2 \leq \epsilon$ happens for some realizations of random variables $\xi_i$, $i=1,\ldots, b$. Clearly, $p_{\epsilon}$ is bounded above by
$1$ and monotonically increasing with respect to $\epsilon$.
Quantity $M_{\epsilon}$ can be interpreted as the average
bound of large components $\| \nabla f (\m{w}_*; \xi) \|^2$.
As  we will see in our results below,
 quantities $p_{\epsilon}$ and
$M_{\epsilon}$ appear in the convergence rate bound of the
SGD algorithm. $M_\epsilon$ is also bounded above by $M_{max}=\max_{\xi} \|\nabla f (\m{w}_*; \xi) \|^2$, which we assume is finite,
hence in all our results we can replace $M_\epsilon$ by $M_{max}$ if we want to eliminate its dependence on $\epsilon$.
 On the other hand, the dependence of quantity $p_{\epsilon}$ on $\epsilon$ is key for our analysis.
 Based on the evidence shown in Section \ref{sec_experiment}, we expect
$p_{\epsilon}$ to be close to $1$ for all but very small values of
$\epsilon$. We will derive our convergence rate bounds in terms of $\max\{\epsilon,1-p_{\epsilon}\}$. Clearly, as $\epsilon$ decreases,  $1-p_{\epsilon}$ increases and vice versa, but if there exists a small $\epsilon$ for which $1-p_{\epsilon}\approx {\epsilon}$ then our  results show convergence of SGD to an $\Ocal(\epsilon)$ neighborhood of the solution, at an improved rate with respect to ${\epsilon}$.

\subsection{Convex objectives}
\label{sec_stronglyconvex}

In this section, we analyze the SGD method in the context of
minimizing a convex objective function. We will bound the expected
optimality gap at a given iterate in terms of the value of
$p_{\epsilon}$.
First, we
consider the case when $F$ is strongly convex.

\begin{defn}
A function $\phi$ is $\mu$-strongly convex if there exists a constant
$\mu
> 0$ such that
\begin{align*}
\phi(\m{w}) - \phi(\m{w}') \geq \nabla \phi(\m{w}')^\top (\m{w} - \m{w}') +
\frac{\mu}{2}\|\m{w} - \m{w}'\|^2, \ \forall \m{w}, \m{w}' \in
\mathbb{R}^d. \tagthis\label{eq:stronglyconvex_00}
\end{align*}
\end{defn}


Using this definition, we state the following result for the
strongly convex case.

\begin{thm}\label{thm_sgd_str_convex_03}
Suppose that $F(\m{w})$ is $\mu$-strongly convex and $f(\m{w};\xi)$ is $L$-smooth and convex for every realization of $\xi$. Consider
Algorithm~\ref{sgd_algorithm} with $\eta \leq \frac{1}{L}$. Then, for any $\epsilon>0$
\begin{align*}
\mathbb{E} [ \|\m{w}_{t} - \m{w}_* \|^2] \leq (1- \mu\eta(1-\eta L) )^t
 \| \m{w}_{0} - \m{w}_* \|^2  + \frac{2\eta }{ \mu(1-\eta L)} p_{\epsilon} \epsilon +
\frac{2\eta}{ \mu(1-\eta L)} (1-p_{\epsilon})M_{\epsilon}, \tagthis \label{eq_str_convex_main}
\end{align*}
where $\m{w}_* = \arg \min_{\m{w}} F(\m{w})$, and $p_{\epsilon}$ and $M_{\epsilon}$ are defined in \eqref{eq_Prob_p} and \eqref{eq_upper_bound}, respectively.

\end{thm}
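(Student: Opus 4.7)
The plan is to run the standard SGD one-step expansion with the squared distance to the optimum as a Lyapunov function, and carefully bound the second moment of the stochastic gradient using co-coercivity so that the leading constant comes out as $\eta(1-\eta L)$ rather than $\eta$. The main novelty (relative to a textbook SGD analysis) is not the recursion itself but the splitting of the residual noise term $\mathbb{E}\|\m{g}_*\|^2$ via the event $\{\|\m{g}_*\|^2\le\epsilon\}$, which is where $p_\epsilon$ and $M_\epsilon$ enter.

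\textbf{Step 1: one-step expansion.} From the update rule $\m{w}_{t+1}=\m{w}_t-\eta\m{g}_t$,
\[
\|\m{w}_{t+1}-\m{w}_*\|^2 = \|\m{w}_t-\m{w}_*\|^2 - 2\eta\langle \m{g}_t,\m{w}_t-\m{w}_*\rangle + \eta^2\|\m{g}_t\|^2.
\]
Conditioning on $\mathcal{F}_t$, the cross term becomes $-2\eta\langle\nabla F(\m{w}_t),\m{w}_t-\m{w}_*\rangle$ by unbiasedness.

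\textbf{Step 2: bounding the second moment via co-coercivity.} Let $\m{g}_*=\frac{1}{b}\sum_{i=1}^b\nabla f(\m{w}_*;\xi_{t,i})$ (same $\xi_{t,i}$ as in $\m{g}_t$). Use $\|\m{g}_t\|^2\le 2\|\m{g}_t-\m{g}_*\|^2+2\|\m{g}_*\|^2$. Since each $f(\cdot;\xi)$ is $L$-smooth and convex, co-coercivity gives, for every realization,
\[
\|\nabla f(\m{w}_t;\xi)-\nabla f(\m{w}_*;\xi)\|^2\le L\,\langle\nabla f(\m{w}_t;\xi)-\nabla f(\m{w}_*;\xi),\m{w}_t-\m{w}_*\rangle.
\]
Averaging in $i$, applying Jensen to pull the norm inside the average, taking $\mathbb{E}[\cdot\mid\mathcal{F}_t]$ and using $\mathbb{E}[\m{g}_*\mid\mathcal{F}_t]=\nabla F(\m{w}_*)=0$ yields
\[
\mathbb{E}\bigl[\|\m{g}_t\|^2\mid\mathcal{F}_t\bigr]\le 2L\,\langle\nabla F(\m{w}_t),\m{w}_t-\m{w}_*\rangle+2\,\mathbb{E}\bigl[\|\m{g}_*\|^2\mid\mathcal{F}_t\bigr].
\]

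\textbf{Step 3: assemble a contractive recursion.} Plugging back into the expansion,
\[
\mathbb{E}\bigl[\|\m{w}_{t+1}-\m{w}_*\|^2\mid\mathcal{F}_t\bigr]\le \|\m{w}_t-\m{w}_*\|^2 - 2\eta(1-\eta L)\langle\nabla F(\m{w}_t),\m{w}_t-\m{w}_*\rangle + 2\eta^2\,\mathbb{E}\|\m{g}_*\|^2.
\]
Because $\eta\le 1/L$, the middle term has a nonnegative coefficient, so I can apply $\mu$-strong convexity of $F$ (together with $\nabla F(\m{w}_*)=0$ and $F(\m{w}_t)\ge F(\m{w}_*)$) in the form $\langle\nabla F(\m{w}_t),\m{w}_t-\m{w}_*\rangle\ge \tfrac{\mu}{2}\|\m{w}_t-\m{w}_*\|^2$ to obtain
\[
\mathbb{E}\bigl[\|\m{w}_{t+1}-\m{w}_*\|^2\mid\mathcal{F}_t\bigr]\le \bigl(1-\eta\mu(1-\eta L)\bigr)\|\m{w}_t-\m{w}_*\|^2 + 2\eta^2\,\mathbb{E}\|\m{g}_*\|^2.
\]

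\textbf{Step 4: split the residual noise and unroll.} This is where the assumption enters. Write
\[
\mathbb{E}\|\m{g}_*\|^2 = \mathbb{E}\!\left[\|\m{g}_*\|^2\mathbf{1}_{\{\|\m{g}_*\|^2\le\epsilon\}}\right] + \mathbb{E}\!\left[\|\m{g}_*\|^2\mathbf{1}_{\{\|\m{g}_*\|^2>\epsilon\}}\right] \le p_\epsilon\,\epsilon + (1-p_\epsilon)M_\epsilon,
\]
using the definitions of $p_\epsilon$ and $M_\epsilon$ directly. Taking total expectation and iterating the contraction with ratio $\rho:=1-\eta\mu(1-\eta L)\in[0,1)$ yields a geometric series whose sum is $\frac{1}{\eta\mu(1-\eta L)}$, which turns the $2\eta^2$ prefactor into $\frac{2\eta}{\mu(1-\eta L)}$ and produces exactly the stated bound.

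The main obstacle is Step~2: the careful choice of the coupled surrogate $\m{g}_*$ sharing the realizations $\xi_{t,i}$, together with co-coercivity and Jensen, is what allows $\mathbb{E}\|\m{g}_t\|^2$ to be controlled by $\langle\nabla F(\m{w}_t),\m{w}_t-\m{w}_*\rangle$ without introducing a bounded-variance assumption and with the correct $(1-\eta L)$ factor; a cruder bound such as $\|\nabla f(\m{w}_t;\xi)-\nabla f(\m{w}_*;\xi)\|\le L\|\m{w}_t-\m{w}_*\|$ would weaken the rate and the admissible step-size range.
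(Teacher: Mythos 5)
Your proposal is correct and follows essentially the same route as the paper's proof: the same one-step expansion, the same decomposition $\|\m{g}_t\|^2\le 2\|\m{g}_t-\m{g}_*\|^2+2\|\m{g}_*\|^2$ with the coupled surrogate $\m{g}_*$, the same use of Jensen plus co-coercivity to absorb the $2\eta^2 L$ term into the cross term, the same $p_\epsilon\epsilon+(1-p_\epsilon)M_\epsilon$ split of $\mathbb{E}\|\m{g}_*\|^2$, and the same geometric-series unrolling. The only cosmetic difference is that you apply strong convexity directly as $\langle\nabla F(\m{w}_t),\m{w}_t-\m{w}_*\rangle\ge\tfrac{\mu}{2}\|\m{w}_t-\m{w}_*\|^2$, whereas the paper retains and then discards the nonnegative $F(\m{w}_t)-F(\m{w}_*)$ term.
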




The main conclusion is stated in the following corollary.

\begin{cor} \label{cor_sgd_str_convex_03_II} For any $\epsilon$ such that $1 - p_{\epsilon} \leq  \epsilon$, and for Algorithm~\ref{sgd_algorithm} with $\eta \leq \frac{1}{2L}$, we have
\begin{align*}
\mathbb{E} [ \|\m{w}_t - \m{w}_* \|^2 ] & \leq (1- \mu\eta)^t \|
\m{w}_{0} - \m{w}_* \|^2 + \frac{2\eta}{\mu}\left (1+ M_{\epsilon}\right )\epsilon.
\end{align*}
Furthermore if $t\geq T$ for $T =  \frac{1}{\mu \eta} \log \left( \frac{\mu\| \m{w}_{0} - \m{w}_* \|^2}{2\eta\left (1+ M_{\epsilon}\right )\epsilon} \right)$, then
 \begin{equation}\label{eq:strongconvbound_II}
 \mathbb{E} [
\|\m{w}_{t} - \m{w}_* \|^2 ] \leq \frac{4\eta}{ \mu}\left (1+M_{\epsilon}\right )  \epsilon .
\end{equation}
\end{cor}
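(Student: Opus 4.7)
My plan is to derive both claims directly from Theorem~\ref{thm_sgd_str_convex_03} by simple substitutions: first exploit the assumption $1-p_{\epsilon} \leq \epsilon$ to collapse the two stochastic residuals into a single $O(\epsilon)$ term, then use the step-size restriction $\eta \leq 1/(2L)$ to clean up the $(1-\eta L)^{-1}$ factors, and finally choose $T$ to balance the remaining exponential and constant terms.

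Starting from the bound provided by Theorem~\ref{thm_sgd_str_convex_03},
\begin{align*}
\mathbb{E}[\|\m{w}_t - \m{w}_*\|^2] \leq \bigl(1 - \mu\eta(1-\eta L)\bigr)^t\|\m{w}_0 - \m{w}_*\|^2 + \frac{2\eta}{\mu(1-\eta L)}\bigl[p_{\epsilon}\epsilon + (1-p_{\epsilon})M_{\epsilon}\bigr],
\end{align*}
I would invoke $p_{\epsilon} \leq 1$ together with the hypothesis $1-p_{\epsilon} \leq \epsilon$ to bound the bracketed quantity by $\epsilon + \epsilon M_{\epsilon} = (1+M_{\epsilon})\epsilon$. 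The assumption $\eta \leq 1/(2L)$ gives $\eta L \leq 1/2$, hence $1-\eta L \geq 1/2$; using this to simplify both the contraction factor $(1-\mu\eta(1-\eta L))^t$ and the coefficient $2\eta/(\mu(1-\eta L))$ yields the first stated inequality.

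For the second inequality I would regard the resulting bound as a decaying term $D_t = (1-\mu\eta)^t\|\m{w}_0-\m{w}_*\|^2$ plus a constant residual $R = \frac{2\eta}{\mu}(1+M_{\epsilon})\epsilon$, and choose $T$ so that $D_T \leq R$. Using the elementary inequality $1-\mu\eta \leq e^{-\mu\eta}$, solving $(1-\mu\eta)^T \|\m{w}_0 - \m{w}_*\|^2 \leq R$ for the smallest such $T$ produces precisely the stated threshold $T = \frac{1}{\mu\eta}\log\bigl(\mu\|\m{w}_0 - \m{w}_*\|^2 / [2\eta(1+M_{\epsilon})\epsilon]\bigr)$. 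For $t \geq T$ the two competing terms are each bounded by $R$, so the total bound is at most $2R = \frac{4\eta}{\mu}(1+M_{\epsilon})\epsilon$, which is the claim.

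No serious obstacle is expected, since the corollary is essentially an algebraic distillation of Theorem~\ref{thm_sgd_str_convex_03}. The only mild subtlety is bookkeeping: one must track how $\eta \leq 1/(2L)$ propagates simultaneously through the geometric contraction factor and the residual coefficient, and verify that the logarithm appearing in $T$ is produced by the standard inversion $-\log(1-\mu\eta) \geq \mu\eta$. The doubling from $R$ to $2R$ in the post-$T$ bound is exactly the slack absorbed by equating the two terms at $T$.
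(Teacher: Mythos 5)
Your plan for the second half (balancing the decaying term against the constant residual to choose $T$, via $-1/\log(1-\mu\eta)\le 1/(\mu\eta)$) is fine and matches the paper. But the first half has a genuine gap: the first displayed inequality of the corollary does \emph{not} follow from the statement of Theorem~\ref{thm_sgd_str_convex_03} by the substitutions you describe. With $\eta\le\frac{1}{2L}$ you get $1-\eta L\ge\frac12$, and both of your ``simplifications'' then move in the wrong direction: $\bigl(1-\mu\eta(1-\eta L)\bigr)^t\le(1-\mu\eta/2)^t$, which is a \emph{larger} (slower) contraction factor than the claimed $(1-\mu\eta)^t$, and $\frac{2\eta}{\mu(1-\eta L)}\le\frac{4\eta}{\mu}$, which is a \emph{larger} residual coefficient than the claimed $\frac{2\eta}{\mu}$. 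In fact $1-\mu\eta(1-\eta L)\ge 1-\mu\eta$ and $\frac{2\eta}{\mu(1-\eta L)}\ge\frac{2\eta}{\mu}$ always, so no step-size restriction makes the theorem's bound imply the corollary's. At best your route yields $(1-\mu\eta/2)^t\|\m{w}_0-\m{w}_*\|^2+\frac{4\eta}{\mu}(1+M_{\epsilon})\epsilon$, a strictly weaker statement whose threshold $T$ and final constant would also have to be inflated accordingly.

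The paper's proof does not start from the theorem's conclusion; it returns to the intermediate one-step inequality \eqref{eq_in_thm1} inside the proof of Theorem~\ref{thm_sgd_str_convex_03} and bounds the term $\frac{1}{b}\sum_{i}\|\nabla f(\m{w}_t;\xi_{t,i})-\nabla f(\m{w}_*;\xi_{t,i})\|^2$ in expectation by $2L[F(\m{w}_t)-F(\m{w}_*)]$ using Lemma \eqref{basic_prop_02}, rather than the co-coercivity inequality \eqref{eq_Lsmooth_convex} used in the theorem. Combining this with the full strong convexity inequality \eqref{eq:stronglyconvex_00} (retaining both the $\frac{\mu}{2}\|\m{w}_t-\m{w}_*\|^2$ term and the function-value gap) gives
\begin{align*}
\mathbb{E}[\|\m{w}_{t+1}-\m{w}_*\|^2\,|\,\mathcal{F}_t]\le(1-\mu\eta)\|\m{w}_t-\m{w}_*\|^2-2\eta(1-2\eta L)[F(\m{w}_t)-F(\m{w}_*)]+2\eta^2\bigl(p_{\epsilon}\epsilon+(1-p_{\epsilon})M_{\epsilon}\bigr),
\end{align*}
and the role of $\eta\le\frac{1}{2L}$ is precisely to make the middle term nonpositive so it can be dropped, producing the contraction $(1-\mu\eta)$ with residual coefficient $\frac{2\eta}{\mu}$ after unrolling. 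You would need to reproduce some version of this re-derivation for your argument to deliver the stated constants.
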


Note that in  Corollary \ref{cor_sgd_str_convex_03_II} we assume that $\eta \leq \frac{1}{2L}$ instead of $\eta \leq \frac{1}{L}$ only to simplify the expressions. (The proof in detail is in the Appendix.) We conclude that under the assumption  $1 - p_{\epsilon} \leq  \epsilon$, Algorithm~\ref{sgd_algorithm}  has linear convergence rate in terms of any such $\epsilon$.

The following theorem establishes convergence rate bound for Algorithm~\ref{sgd_algorithm}  when the strong
convexity assumption on $F(\m{w})$ is relaxed.
\begin{thm}\label{thm_sgd_convex_01}
Suppose that $f(\m{w};\xi)$ is $L$-smooth and convex for every realization of $\xi$.
Consider Algorithm~\ref{sgd_algorithm} with $\eta < \frac{1}{L}$. Then for any $\epsilon>0$, we have
\begin{align*}
\frac{1}{t+1} \sum_{k=0}^t \mathbb{E} [F(\m{w}_k) - F(\m{w}_*)] &
\leq \frac{\| \m{w}_0 - \m{w}_* \|^2}{2\eta(1 - \eta L) t}  +
\frac{\eta}{(1 - \eta L)} p_{\epsilon} \epsilon + \frac{\eta M_{\epsilon}}{(1 - \eta L)} (1 - p_{\epsilon}), \tagthis \label{eq_convex_main}
\end{align*}
where $\m{w}_* $ is any optimal solution of $F(\m{w})$, and  $p_{\epsilon}$ and $M_{\epsilon}$ are defined in \eqref{eq_Prob_p} and \eqref{eq_upper_bound}, respectively.
\end{thm}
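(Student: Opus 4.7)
The plan is to follow the textbook analysis of SGD for smooth convex objectives and, at the very end, repackage the gradient noise at the optimum into the $p_{\epsilon}/M_{\epsilon}$ form that the statement requires.

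I would start by expanding $\|\m{w}_{t+1}-\m{w}_*\|^2 = \|\m{w}_t-\m{w}_*\|^2 - 2\eta\langle \m{g}_t,\m{w}_t-\m{w}_*\rangle + \eta^2\|\m{g}_t\|^2$, take conditional expectation given $\mathcal{F}_t$, and use the unbiasedness $\mathbb{E}[\m{g}_t\mid\mathcal{F}_t]=\nabla F(\m{w}_t)$. Convexity of $F$ (inherited from convexity of each $f(\cdot;\xi)$) yields $\langle\nabla F(\m{w}_t),\m{w}_t-\m{w}_*\rangle\geq F(\m{w}_t)-F(\m{w}_*)$, so the cross term contributes $-2\eta(F(\m{w}_t)-F(\m{w}_*))$.

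The real work lies in bounding $\mathbb{E}[\|\m{g}_t\|^2\mid\mathcal{F}_t]$. I would couple $\m{g}_t$ with the ``companion'' minibatch $\m{g}_{t,*}:=\tfrac{1}{b}\sum_{i=1}^b\nabla f(\m{w}_*;\xi_{t,i})$ evaluated at the minimizer using the same random draws. The co-coercivity inequality for convex $L$-smooth functions, $\|\nabla f(\m{w}_t;\xi)-\nabla f(\m{w}_*;\xi)\|^2\leq 2L[f(\m{w}_t;\xi)-f(\m{w}_*;\xi)-\langle\nabla f(\m{w}_*;\xi),\m{w}_t-\m{w}_*\rangle]$, averaged over $\xi_{t,i}$ and combined with $\nabla F(\m{w}_*)=0$, gives $\mathbb{E}[\|\m{g}_t-\m{g}_{t,*}\|^2\mid\mathcal{F}_t]\leq 2L(F(\m{w}_t)-F(\m{w}_*))$. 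Since $\m{g}_{t,*}$ is equal in distribution to $\m{g}_*$, decomposing $\|\m{g}_t\|^2$ around $\m{g}_{t,*}$ produces a bound of the form $\mathbb{E}[\|\m{g}_t\|^2\mid\mathcal{F}_t]\leq 2L(F(\m{w}_t)-F(\m{w}_*))+2\mathbb{E}[\|\m{g}_*\|^2]$.

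Plugging this into the earlier recursion yields the per-step descent $\mathbb{E}[\|\m{w}_{t+1}-\m{w}_*\|^2\mid\mathcal{F}_t]\leq \|\m{w}_t-\m{w}_*\|^2-2\eta(1-\eta L)(F(\m{w}_t)-F(\m{w}_*))+2\eta^2\mathbb{E}[\|\m{g}_*\|^2]$. Taking total expectation, telescoping $k=0,\ldots,t$, discarding the nonnegative $\mathbb{E}[\|\m{w}_{t+1}-\m{w}_*\|^2]$, and dividing by $2\eta(1-\eta L)(t+1)$ produces the averaged bound $\tfrac{1}{t+1}\sum_{k=0}^t\mathbb{E}[F(\m{w}_k)-F(\m{w}_*)]\leq \tfrac{\|\m{w}_0-\m{w}_*\|^2}{2\eta(1-\eta L)(t+1)}+\tfrac{\eta}{1-\eta L}\mathbb{E}[\|\m{g}_*\|^2]$. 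Finally, the tower property and the definitions of $p_{\epsilon},M_{\epsilon}$ give $\mathbb{E}[\|\m{g}_*\|^2]=p_{\epsilon}\,\mathbb{E}[\|\m{g}_*\|^2\mid\|\m{g}_*\|^2\leq\epsilon]+(1-p_{\epsilon})M_{\epsilon}\leq p_{\epsilon}\epsilon+(1-p_{\epsilon})M_{\epsilon}$, which substituted in matches the claimed bound.

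The delicate point is the constant in the bound on $\mathbb{E}[\|\m{g}_t\|^2\mid\mathcal{F}_t]$: a naive $\|a+b\|^2\leq 2\|a\|^2+2\|b\|^2$ split returns $4L(F(\m{w}_t)-F(\m{w}_*))$, which would force $\eta<1/(2L)$ and produce a $1/(1-2\eta L)$ factor instead of the $1/(1-\eta L)$ claimed. Recovering the sharper constant requires tracking the cross term $2\langle\m{g}_{t,*},\m{g}_t-\m{g}_{t,*}\rangle$ more carefully (e.g.\ exploiting the $1/b$ variance reduction in the minibatch sum) rather than applying Young's inequality with parameter $1$; this is the only step of the proof where the constants, rather than just the structure, demand attention.
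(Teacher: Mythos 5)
Your overall skeleton matches the paper's: expand $\|\m{w}_{t+1}-\m{w}_*\|^2$, split $\m{g}_t$ around the companion minibatch gradient $\m{g}_{t,*}$ at $\m{w}_*$, control the difference via smoothness and convexity of the components, telescope, and finally invoke the decomposition $\mathbb{E}\|\m{g}_*\|^2\le p_\epsilon\epsilon+(1-p_\epsilon)M_\epsilon$. However, there is a real gap at exactly the point you flag: the way you bound the variance term. Bounding $\mathbb{E}[\|\m{g}_t-\m{g}_{t,*}\|^2\mid\mathcal{F}_t]$ by $2L(F(\m{w}_t)-F(\m{w}_*))$ and then paying the factor $2$ from Young's inequality gives $4L\eta^2(F(\m{w}_t)-F(\m{w}_*))$, hence a coefficient $-2\eta(1-2\eta L)$ and a $1/(1-2\eta L)$ bound valid only for $\eta<1/(2L)$ --- not the stated theorem. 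Your proposed repair (tracking the cross term $2\langle\m{g}_{t,*},\m{g}_t-\m{g}_{t,*}\rangle$ or exploiting the $1/b$ variance reduction) does not obviously close this: the cross term does not vanish in conditional expectation because both factors depend on the same draws $\xi_{t,i}$, and any Cauchy--Schwarz or Young bound on it reintroduces the lost constant.

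The paper's actual mechanism is different and worth internalizing: it keeps the factor-$2$ split but does \emph{not} convert the difference term to function values. Instead it applies co-coercivity in inner-product form, $\|\nabla f(\m{w}_t;\xi)-\nabla f(\m{w}_*;\xi)\|^2\le L\,(\nabla f(\m{w}_t;\xi)-\nabla f(\m{w}_*;\xi))^\top(\m{w}_t-\m{w}_*)$ (inequality \eqref{eq_Lsmooth_convex}), whose conditional expectation is $L\,\nabla F(\m{w}_t)^\top(\m{w}_t-\m{w}_*)$ since $\mathbb{E}[\nabla f(\m{w}_*;\xi)]=\nabla F(\m{w}_*)=0$. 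This $2\eta^2 L\,\nabla F(\m{w}_t)^\top(\m{w}_t-\m{w}_*)$ is absorbed into the descent term $-2\eta\,\nabla F(\m{w}_t)^\top(\m{w}_t-\m{w}_*)$ \emph{before} convexity is applied, giving $-2\eta(1-\eta L)\,\nabla F(\m{w}_t)^\top(\m{w}_t-\m{w}_*)\le-2\eta(1-\eta L)(F(\m{w}_t)-F(\m{w}_*))$ for $\eta<1/L$. Deferring the convexity step to after the absorption is precisely what recovers the $1/(1-\eta L)$ constant and the full stepsize range $\eta<1/L$; with that substitution your argument goes through verbatim.
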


Again, the convergence rate of SGD is governed by the initial solution and quantities $p_{\epsilon}$ and $M_{\epsilon}$. Hence we have the following corollary.


\begin{cor}\label{cor_sgd_convex_01}
If $f(\m{w};\xi)$ is $L$-smooth and convex for every realization of $\xi$, then for any $\epsilon$ such  that $1 - p_{\epsilon} \leq  \epsilon$,  and $\eta \leq
\frac{1}{2L}$, it holds that
\begin{align*}
\frac{1}{t+1} \sum_{k=0}^t \mathbb{E} [F(\m{w}_k) - F(\m{w}_*)] &
\leq \frac{\| \m{w}_0 - \m{w}_* \|^2}{\eta t}  +
2\eta \left(1+ M_{\epsilon}  \right) \epsilon.
\end{align*}
Hence, if $t\geq T$ for  $T = \frac{ \| \m{w}_0 - \m{w}_*
\|^2}{(2\eta^2) (1+M_{\epsilon})\epsilon}$, we have
\begin{align*}
\frac{1}{t+1} \sum_{k=0}^t \mathbb{E} [F(\m{w}_k) - F(\m{w}_*)] \leq
4\eta \left(1+M_{\epsilon}\right) \epsilon.
\tagthis \label{eq:convexbound}
\end{align*}
\end{cor}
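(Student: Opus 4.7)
The plan is to derive this corollary as a direct specialization of Theorem~\ref{thm_sgd_convex_01}, by exploiting the stepsize restriction $\eta \leq \tfrac{1}{2L}$ to tame the $(1-\eta L)^{-1}$ factors, and then to invoke the assumption $1-p_\epsilon \leq \epsilon$ to bundle the two stochastic-noise terms into a single $\mathcal{O}(\epsilon)$ quantity.

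First, I would observe that the bound of Theorem~\ref{thm_sgd_convex_01} already applies for every $\eta < \tfrac{1}{L}$, so restricting to $\eta \leq \tfrac{1}{2L}$ only tightens the available constants. In particular $1-\eta L \geq \tfrac{1}{2}$, hence $\tfrac{1}{1-\eta L} \leq 2$. Applying this to each of the three terms on the right-hand side of \eqref{eq_convex_main} gives
\begin{align*}
\frac{\|\m{w}_0 - \m{w}_*\|^2}{2\eta(1-\eta L)\,t} \;\leq\; \frac{\|\m{w}_0 - \m{w}_*\|^2}{\eta\, t}, \qquad \frac{\eta}{1-\eta L}\,p_\epsilon\,\epsilon \;\leq\; 2\eta\, p_\epsilon\, \epsilon \;\leq\; 2\eta\,\epsilon,
\end{align*}
using $p_\epsilon \leq 1$ in the last inequality, and
\begin{align*}
\frac{\eta\, M_\epsilon}{1-\eta L}\,(1-p_\epsilon) \;\leq\; 2\eta\, M_\epsilon\,(1-p_\epsilon) \;\leq\; 2\eta\, M_\epsilon\,\epsilon,
\end{align*}
where the second step is precisely the hypothesis $1-p_\epsilon \leq \epsilon$. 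Summing the three bounds and factoring produces the first displayed inequality of the corollary, with combined noise term $2\eta(1+M_\epsilon)\epsilon$.

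For the second statement, I would note that the first term $\tfrac{\|\m{w}_0-\m{w}_*\|^2}{\eta t}$ is decreasing in $t$ while the second is constant, so to achieve a total bound of order $4\eta(1+M_\epsilon)\epsilon$ it suffices to choose $t$ large enough that the first term is itself at most $2\eta(1+M_\epsilon)\epsilon$. Solving this inequality for $t$ yields exactly $t \geq T = \tfrac{\|\m{w}_0-\m{w}_*\|^2}{2\eta^2(1+M_\epsilon)\epsilon}$, and substitution gives \eqref{eq:convexbound}.

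Since every step is an elementary algebraic manipulation of the theorem's conclusion, I do not anticipate any real obstacle; the only point deserving care is the clean bookkeeping between $p_\epsilon \leq 1$ (controlling the first noise term) and $1-p_\epsilon \leq \epsilon$ (controlling the second), which together convert the mixed expression $p_\epsilon\epsilon + M_\epsilon(1-p_\epsilon)$ into the uniform $(1+M_\epsilon)\epsilon$ that drives the stated rate.
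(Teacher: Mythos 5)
Your proposal is correct and follows essentially the same route as the paper's own proof: both specialize Theorem~\ref{thm_sgd_convex_01} using $1-\eta L \geq \tfrac12$ to replace each $(1-\eta L)^{-1}$ factor by $2$, then apply $p_\epsilon \leq 1$ and $1-p_\epsilon \leq \epsilon$ to collapse the noise terms into $2\eta(1+M_\epsilon)\epsilon$, and finally choose $T$ so that the transient term matches the noise term. No gaps.
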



Similarly to the
strongly convex case, under the key assumption that $1 - p_{\epsilon} \leq  \epsilon$,  we show that Algorithm~\ref{sgd_algorithm} achieves
$\Ocal(\epsilon)$ optimality gap, in expectation, in $ \Ocal(1/\epsilon)$ iterations. In  Corollary \ref{cor_sgd_convex_01} we again assume that $\eta \leq \frac{1}{2L}$ instead of $\eta < \frac{1}{L}$ only to simplify the expressions and to replace $\frac{1}{1-\eta L}$ term with $2$ in the complexity bound.


\subsection{Nonconvex objectives}
\label{sec_nonconvex}

In this section, we establish expected complexity bound for
Algorithm~\ref{sgd_algorithm} when applied to nonconvex objective
functions. This setting includes deep neural networks in which the
cost function is a sum of nonconvex function components. Despite
the nonconvexity of the objective, it has been observed that deep neural networks  can be trained fairly quickly by SGD algorithms. It has also been observed that after reaching certain accuracy, the SGD algorithm may slow down dramatically.


For the analysis of the nonconvex case, we need to make an assumption on the rate of change in the gradients near all local solutions, or at least
those to which  iterates $\m{w}_t$ generated by the algorithm may converge.



%
%

\begin{ass}\label{ass_nonconvex_00}
We assume that there exists a constant $N > 0$, such that for any sequence  of iterates $\m{w}_0$, $\m{w}_1$, $\dots$, $\m{w}_t$  of any realization of Algorithm \ref{sgd_algorithm}, there exists a stationary point $\m{w}_*$ of $F(\m{w})$ (possibly dependent on that sequence) such that
\begin{align*}
\frac{1}{t+1} \sum_{k=0}^t \left( \mathbb{E} \left[ \left \|\frac{1}{b}\sum_{i=1}^b \nabla f(\m{w}_k; \xi_{k,i}) - \frac{1}{b}\sum_{i=1}^b \nabla f(\m{w}_*; \xi_{k,i}) \right \|^2 \Big | \mathcal{F}_k \right] \right) \leq N \frac{1}{t+1} \sum_{k=0}^t \| \nabla F(\m{w}_k) \|^2, \tagthis \label{eq_ass_nonconvex_01}
\end{align*}
where the expectation is taken over random variables $\xi_{k,i}$ conditioned on $\mathcal{F}_t =
\sigma(\m{w}_0,\m{w}_1,\dots,\m{w}_t)$, which is the
$\sigma$-algebra generated by $\m{w}_0,\m{w}_1,\dots,\m{w}_t$.
Let ${\cal W_*}$ denote the set of all such stationary points $\m{w}_*$, determined by the constant $N$ and by  realizations $\m{w}_0$, $\m{w}_1$, $\dots$, $\m{w}_t$.
\end{ass}


This assumption is made for any realization $\m{w}_0$, $\m{w}_1$, $\dots$, $\m{w}_t$ and states that the average squared norm of the difference between the stochastic gradient directions computed by  Algorithm \ref{sgd_algorithm} at $\m{w}_t$ and the same stochastic gradient computed at  $\m{w}_*$,
 over any $t$ iterations, is proportional to the average true squared gradient norm.
 If $\m{w}_*$ is a stationary point for all $f(\m{w}; \xi_{k,i}$), in other words, $\nabla f(\m{w}_*; \xi_{k,i})=0$ for all realizations of $\xi_{k,i}$, then
 Assumption \ref{ass_nonconvex_00} simply states that all stochastic gradients have the same average expected rate of growth as the true gradient, as the iterates get further away from $\m{w}_*$. Notice that $\m{w}_*$ may not be a stationary point for all $f(\m{w}; \xi_{k,i})$, hence Assumption \ref{ass_nonconvex_00} bounds the average expected rate of {\em change} of the stochastic gradients in terms of the rate of change of the true gradient.
 In the next section we will demonstrate numerically that Assumption \ref{ass_nonconvex_00} holds for problems of training deep neural networks.

 We also need to slightly modify Definition \ref{defn_p}.
 \begin{defn}
\label{defn_p2} Let $\m{g}_* = \frac{1}{b} \sum_{i=1}^b \nabla f (\m{w}_*; \xi_i)$ for some i.i.d. random variables $\{\xi_i\}_{i=1}^b$. For any given threshold $\epsilon > 0$, define
\begin{align*}
p_{\epsilon} := \inf_{\m{w_*}\in{\cal W_*}} \mathbb{P}\left\{ \norm{ \m{g}_* }^2
\leq \epsilon  \right\} , \tagthis \label{eq_Prob_p2}
\end{align*}
where the infimum is taken over the set ${\cal W_*}$ defined in Assumption \ref{ass_nonconvex_00}.
 Similarly, we also define
\begin{align*}
M_{\epsilon} :=\sup_{\m{w_*}\in {\cal W_*}} \mathbb{E} \left[ \| \m{g}_* \|^2 \ | \ \| \m{g}_* \|^2 > \epsilon \right]. \tagthis \label{eq_upper_bound2}
\end{align*}
\end{defn}
We know that $p_{\epsilon}$ and $M_{\epsilon}$ defined as above exist since $p_{\epsilon}\geq 0$ and $M_{\epsilon}\leq M_{max}$. This time, if we assume that $1 - p_\epsilon\leq \epsilon$ for some reasonably small $\epsilon$, this implies that for {\em all} stationary points of $F(\m{w})$ that appear
in Assumption \ref{ass_nonconvex_00}  a large fraction of stochastic gradients have small norm at those points. Essentially, ${\cal W_*}$ consists of  stationary points to which different realization of SGD iterates converge.

\begin{thm}\label{thm_nonconvex_05}
Let Assumption \ref{ass_nonconvex_00} hold for some $N>0$. Suppose that $F$ is $L$-smooth. Consider Algorithm \ref{sgd_algorithm} with $\eta < \frac{1}{LN}$. Then, for any $\epsilon > 0$, we have
\begin{align*}
\frac{1}{t+1} \sum_{k=0}^t \mathbb{E}[ \| \nabla F(\m{w}_k) \|^2 ]
&\leq \frac{[F(\m{w}_0) - F^*]}{\eta\left(1 - L\eta N \right) (t + 1)} + \frac{L\eta}{ \left(1 - L\eta N \right)} \epsilon + \frac{L\eta M_{\epsilon}}{\left(1 - L\eta N \right)} ( 1 - p_{\epsilon}),
\end{align*}
where $F^*$ is any lower bound of $F$; and $p_{\epsilon}$  and $M_{\epsilon}$ are defined in \eqref{eq_Prob_p2} and \eqref{eq_upper_bound2} respectively.
\end{thm}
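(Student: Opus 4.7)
The plan is to follow the now-standard nonconvex SGD template, with the twist that we compare $\m{g}_t$ against the stochastic gradient evaluated at the \emph{same} realization of $\xi_{t,i}$ but at the stationary point $\m{w}_*$ supplied by Assumption~\ref{ass_nonconvex_00}. First I would apply the $L$-smoothness descent inequality
\[
F(\m{w}_{t+1}) \le F(\m{w}_t) - \eta \nabla F(\m{w}_t)^\top \m{g}_t + \tfrac{L\eta^2}{2}\|\m{g}_t\|^2,
\]
and take conditional expectation given $\mathcal{F}_t$. Using $\mathbb{E}[\m{g}_t|\mathcal{F}_t] = \nabla F(\m{w}_t)$, this yields $\mathbb{E}[F(\m{w}_{t+1})|\mathcal{F}_t] \le F(\m{w}_t) - \eta\|\nabla F(\m{w}_t)\|^2 + \tfrac{L\eta^2}{2}\mathbb{E}[\|\m{g}_t\|^2|\mathcal{F}_t]$. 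Taking total expectations, telescoping from $t=0$ to $T$, and using $\mathbb{E}[F(\m{w}_{T+1})]\ge F^*$, I obtain
\[
\eta \sum_{t=0}^T \mathbb{E}[\|\nabla F(\m{w}_t)\|^2] \le F(\m{w}_0) - F^* + \tfrac{L\eta^2}{2}\sum_{t=0}^T \mathbb{E}[\|\m{g}_t\|^2].
\]

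Next, for each fixed realization of the iterates I introduce $\m{g}_*^{(t)} := \tfrac{1}{b}\sum_{i=1}^b \nabla f(\m{w}_*; \xi_{t,i})$ with the \emph{same} $\xi_{t,i}$'s used to form $\m{g}_t$, where $\m{w}_*$ is the stationary point in $\mathcal{W}_*$ associated with this realization by Assumption~\ref{ass_nonconvex_00}. I then split
\[
\|\m{g}_t\|^2 = \|(\m{g}_t - \m{g}_*^{(t)}) + \m{g}_*^{(t)}\|^2 \le 2\|\m{g}_t - \m{g}_*^{(t)}\|^2 + 2\|\m{g}_*^{(t)}\|^2,
\]
so $\tfrac{L\eta^2}{2}\|\m{g}_t\|^2 \le L\eta^2\|\m{g}_t - \m{g}_*^{(t)}\|^2 + L\eta^2\|\m{g}_*^{(t)}\|^2$. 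Applying Assumption~\ref{ass_nonconvex_00} to the averaged first term gives
\[
L\eta^2 \sum_{t=0}^T \mathbb{E}[\|\m{g}_t - \m{g}_*^{(t)}\|^2] \le L\eta^2 N \sum_{t=0}^T \mathbb{E}[\|\nabla F(\m{w}_t)\|^2],
\]
which I move to the left-hand side. The constraint $\eta < 1/(LN)$ ensures $1 - L\eta N > 0$, so after dividing by $\eta(1-L\eta N)(T+1)$ I get
\[
\tfrac{1}{T+1}\sum_{t=0}^T \mathbb{E}[\|\nabla F(\m{w}_t)\|^2] \le \tfrac{F(\m{w}_0)-F^*}{\eta(1-L\eta N)(T+1)} + \tfrac{L\eta}{1-L\eta N}\,\mathbb{E}[\|\m{g}_*\|^2].
\]
Finally, since the $\xi_{t,i}$ are i.i.d., $\mathbb{E}[\|\m{g}_*^{(t)}\|^2]$ is the same for every $t$ and equals $\mathbb{E}[\|\m{g}_*\|^2]$, which decomposes according to the event $\{\|\m{g}_*\|^2 \le \epsilon\}$ as $\mathbb{E}[\|\m{g}_*\|^2] \le p_\epsilon \epsilon + (1-p_\epsilon)M_\epsilon \le \epsilon + (1-p_\epsilon)M_\epsilon$.

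The delicate point is that $\m{w}_*$ in Assumption~\ref{ass_nonconvex_00} is allowed to depend on the trajectory, so a priori the quantities $p_\epsilon(\m{w}_*)$ and $M_\epsilon(\m{w}_*)$ vary with the realization. This is precisely the role of the infimum/supremum in Definition~\ref{defn_p2}: for every $\m{w}_* \in \mathcal{W}_*$ we have $p_\epsilon(\m{w}_*) \ge p_\epsilon$ and $M_\epsilon(\m{w}_*) \le M_\epsilon$, which makes the bound on $\mathbb{E}[\|\m{g}_*\|^2]$ uniform over $\mathcal{W}_*$ and lets me take the outer expectation without issue. Verifying that the one-step descent argument commutes cleanly with this worst-case-over-$\mathcal{W}_*$ bookkeeping is the main (modest) obstacle; once it is in place, collecting the three terms yields the stated inequality.
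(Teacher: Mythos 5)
Your proposal is correct and follows essentially the same route as the paper's proof: the $L$-smoothness descent step, the decomposition $\|\m{g}_t\|^2 \le 2\|\m{g}_t - \m{g}_*^{(t)}\|^2 + 2\|\m{g}_*^{(t)}\|^2$ against the stochastic gradient at $\m{w}_*$ evaluated on the same $\xi_{t,i}$, absorbing the first term via Assumption~\ref{ass_nonconvex_00}, and bounding $\mathbb{E}[\|\m{g}_*\|^2] \le p_\epsilon\epsilon + (1-p_\epsilon)M_\epsilon$. The only differences are cosmetic (you telescope before splitting rather than after), and your explicit remark on the uniformity over $\mathcal{W}_*$ via the infimum/supremum in Definition~\ref{defn_p2} is, if anything, slightly more careful than the paper's own treatment.
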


%
\begin{cor}\label{cor_nonconvex_05}
Let Assumption \ref{ass_nonconvex_00} hold and $p_{\epsilon}$ and $M_{\epsilon}$ be defined as in  \eqref{eq_Prob_p2} and \eqref{eq_upper_bound2}.  For any $\epsilon$ such that $1 - p_{\epsilon} \leq  \epsilon$, and for  $\eta \leq \frac{1}{2L N}$, we have
\begin{align*}
\frac{1}{t+1} \sum_{k=0}^t \mathbb{E}[ \| \nabla F(\m{w}_k) \|^2 ]
&\leq \frac{2[F(\m{w}_0) - F^*]}{\eta (t + 1)} + 2 L\eta (1 + M_{\epsilon}) \epsilon.
\end{align*}
Hence, if $t \geq T$ for $T = \frac{[F(\m{w}_0) - F^*]}{(L\eta^2)(1 + M_{\epsilon})\epsilon}$, we have
\begin{align*}
\frac{1}{t+1} \sum_{k=0}^t \mathbb{E}[ \| \nabla F(\m{w}_k) \|^2 ]
&\leq 4 L\eta (1 + M_{\epsilon}) \epsilon.
\end{align*}
\end{cor}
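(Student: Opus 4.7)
The plan is to derive this corollary as a direct algebraic consequence of Theorem \ref{thm_nonconvex_05}, following the same pattern used to obtain Corollary \ref{cor_sgd_str_convex_03_II} from Theorem \ref{thm_sgd_str_convex_03} and Corollary \ref{cor_sgd_convex_01} from Theorem \ref{thm_sgd_convex_01}. The only purpose of strengthening the stepsize restriction from $\eta < \frac{1}{LN}$ to $\eta \leq \frac{1}{2LN}$ is, as the authors note elsewhere, to replace the awkward factor $\frac{1}{1 - L\eta N}$ by the clean constant $2$.

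First I would apply Theorem \ref{thm_nonconvex_05} verbatim to obtain
\[
\frac{1}{t+1}\sum_{k=0}^t \mathbb{E}[\|\nabla F(\m{w}_k)\|^2] \leq \frac{F(\m{w}_0) - F^*}{\eta(1 - L\eta N)(t+1)} + \frac{L\eta}{1 - L\eta N}\,\epsilon + \frac{L\eta M_\epsilon}{1 - L\eta N}(1 - p_\epsilon).
\]
Next, since $\eta \leq \frac{1}{2LN}$ gives $L\eta N \leq \tfrac12$, hence $1 - L\eta N \geq \tfrac12$ and $\frac{1}{1 - L\eta N} \leq 2$, I would bound each of the three terms by doubling its numerator. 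Then I would apply the standing hypothesis $1 - p_\epsilon \leq \epsilon$ to the last term, yielding $2L\eta M_\epsilon(1-p_\epsilon) \leq 2L\eta M_\epsilon\,\epsilon$. Grouping the $\epsilon$-terms as $2L\eta\epsilon + 2L\eta M_\epsilon\epsilon = 2L\eta(1+M_\epsilon)\epsilon$ gives the first displayed inequality of the corollary.

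For the second inequality, the strategy is to choose $t$ large enough so that the deterministic term $\frac{2[F(\m{w}_0) - F^*]}{\eta(t+1)}$ is dominated by $2L\eta(1+M_\epsilon)\epsilon$. Solving $\frac{2[F(\m{w}_0) - F^*]}{\eta(t+1)} \leq 2L\eta(1+M_\epsilon)\epsilon$ for $t$ yields $t+1 \geq \frac{F(\m{w}_0) - F^*}{L\eta^2(1+M_\epsilon)\epsilon} = T$. Thus $t \geq T$ implies the first term is at most $2L\eta(1+M_\epsilon)\epsilon$, and adding the remaining $\epsilon$-term gives the stated bound of $4L\eta(1+M_\epsilon)\epsilon$.

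There is really no substantive obstacle here: everything is bookkeeping on top of Theorem \ref{thm_nonconvex_05}. The one place where I would be careful is to record explicitly that $p_\epsilon$ and $M_\epsilon$ of Definition \ref{defn_p2} are the ones appearing in Theorem \ref{thm_nonconvex_05} (so that the hypothesis $1 - p_\epsilon \leq \epsilon$ can be applied directly to the theorem's bound), and to note that $M_\epsilon$ is finite because of the uniform upper bound $M_\epsilon \leq M_{\max}$ discussed after Definition \ref{defn_p}.
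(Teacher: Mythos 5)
Your proposal is correct and follows essentially the same route as the paper's own proof: apply Theorem \ref{thm_nonconvex_05}, use $\eta \leq \frac{1}{2LN}$ to replace $\frac{1}{1-L\eta N}$ by $2$, invoke $1-p_\epsilon \leq \epsilon$ to merge the last two terms into $2L\eta(1+M_\epsilon)\epsilon$, and then choose $T$ so the transient term is dominated by the same quantity. The only cosmetic difference is that the paper defines $T$ by the equation $\frac{2[F(\m{w}_0)-F^*]}{\eta T} = 2L\eta(1+M_\epsilon)\epsilon$ rather than solving the inequality in $t+1$ as you do; both yield the identical $T$ and the same final bound.
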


%
%

\section{Numerical Experiments}\label{sec_experiment}

The purpose of this section is to numerically validate our
assumptions on $p_\epsilon$  as defined in Definition \ref{defn_p}.
We wish to show that there exists a small $\epsilon$ satisfying
\begin{equation}\label{eq:epsval}
1- p_\epsilon \approx \epsilon.\end{equation}
For our numerical
experiments, we consider the finite sum minimization problem
\begin{equation}\label{eq:finite_sum}
\min_{\m{w} \in \mathbb{R}^d} \left\{ F(\m{w}) = \frac{1}{n}
\sum_{i=1}^n f_i(\m{w}) \right\}.
\end{equation}

\begin{defn}
\label{defn_p_fin} Let $\m{w}_*$ be a stationary point of the objective
function $F(\m{w})$. For any given threshold $\epsilon > 0$, define
the set $\C{S}_{\epsilon}$ and its complement $\C{B}_{\epsilon}$
\begin{align*}
\C{S}_{\epsilon} &:= \left\{  i : \norm{ \nabla f_{i} (\m{w}_*) }^2
\leq \epsilon \right\}  \quad\quad \text{and} \quad\quad
\C{B}_{\epsilon} := [n] \backslash \C{S}_{\epsilon}.
\end{align*}
We also define the quantity $p_{\epsilon} :=
\frac{|\C{S}_{\epsilon}|}{n}$ that measures the size of the set
$\C{S}_{\epsilon}$ and the upper bound $M_{\epsilon}$
$$
\frac{1}{|\C{B}_{\epsilon}|} \sum_{i \in \C{B}_{\epsilon}}\| \nabla
f_{i} (\m{w}_*) \|^2 \leq M_{\epsilon}.
$$
\end{defn}

%

\subsection{Logistic Regression for Convex Case}

We consider $\ell_2$-regularized logistic regression problems with
\begin{align*}
f_i(\m{w}) = \log(1 + \exp(-y_i \m{x}_i^\top \m{w})) +
\frac{\lambda}{2} \| \m{w} \|^2,
\end{align*}
where the penalty parameter $\lambda$ is set to $1/n$, a widely-used
value in the literature \cite{nguyen2017sarah}. We conducted
experiments on popular datasets  \texttt{covtype, ijcnn1, w8a, a9a,
mushrooms, phishing, skin\_nonskin} from the LIBSVM website
\footnote{http://www.csie.ntu.edu.tw/$\sim$cjlin/libsvmtools/datasets/}
and \texttt{ijcnn2}
\footnote{http://mlbench.org/repository/data/viewslug/ijcnn1/}. The
optimal solution $\m{w}_*$ of the convex problem
(\ref{eq:finite_sum}) is found by using the full-batch
\texttt{L-BFGS}  method \cite{ln89} with the stopping criterion
$\|\nabla F(\m{w}_*) \|_2 \le 10^{-12}$. We then ran Algorithm
\ref{sgd_algorithm} using the  learning rate $\eta = 10^{-1}$ and
the batch-size $b=1$ and 100 epochs. The final solution given by the
SGD algorithm is denoted by $\m{w}_{SGD}$. We report the value of
$p_\epsilon$ defined in Definition \ref{defn_p_fin} expressed in
percentage form for different values of $\epsilon$.

As we can see from Table \ref{tab:pvalues_LR} that $\epsilon =
10^{-3}$ satisfies (\ref{eq:epsval}) for all cases. For datasets
\texttt{covtype, ijcnn1, ijcnn2,phishing} and
\texttt{skin\_nonskin}, $\epsilon$ can take a smaller value
$10^{-4}$. The small value for $\epsilon$ indicates that SGD with a
fixed step size can converge to a small neighborhood of the optimal
solution of $F$. The success of using SGD is illustrated, optimality
gaps $F(\m{w}_{SGD}) -
F(\m{w}_*)$ are small in our experiments.

\begin{table}[H]
\scriptsize \centering \caption{Percentage of $f_i$ with small
gradient value for different threshold $\epsilon$ (Logistic Regression)}
\label{tab:pvalues_LR}
\begin{tabular}{|c|c|c|c|c|c|c|c|}
\hline
Datasets & $F(\m{w}_{SGD}) - F(\m{w}_*)$ & $\epsilon = 10^{-2}$ & $\epsilon = 10^{-3}$ & $\epsilon = 10^{-4}$ & $\epsilon = 10^{-5}$ & $\epsilon = 10^{-6}$ & Train accuracy \\
\hline
\textbf{covtype} & $5\cdot 10^{-4}$  & 100\% & 100\% & 100\% & 99.9995\% &  54.9340\% & 0.7562 \\
\hline
\textbf{ijcnn1} & $1\cdot 10^{-4}$  & 100\% & 100\% & 100\% & 96.8201\% &  89.0197\% & 0.9219 \\
\hline
\textbf{ijcnn2} &  $2\cdot 10^{-4}$  & 100\% & 100\% & 100\% & 99.2874\% &  90.4565\% & 0.9228  \\
\hline
\textbf{w8a} &  $8\cdot 10^{-5}$  & 100\% & 99.9899\% & 99.4231\% & 98.3557\% &  92.7818\% & 0.9839 \\
 \hline
 \textbf{a9a} &  $4\cdot 10^{-3}$  & 100\% & 100\% & 84.0945\% & 58.5824\% &  40.0909\% & 0.8491 \\
 \hline
\textbf{mushrooms} &  $3\cdot 10^{-5}$  & 100\% & 100\% & 99.9261\% & 98.7568\% &  94.4239\% & 1.0000 \\
 \hline
 \textbf{phishing} &  $2\cdot 10^{-4}$ & 100\% & 100\% & 100\% & 89.9231\% &  73.8128\% & 0.9389 \\
 \hline
\textbf{skin\_nonskin} &  $4\cdot 10^{-5}$ & 100\% & 100\% & 100\% & 99.6331\% &  91.3730\% & 0.9076 \\
 \hline
\end{tabular}
\end{table}


We compare convergence rates of SGD (learning rate $\eta = 0.1 <
\frac{1}{2L}$) with \texttt{SVRG} \cite{SVRG} and \texttt{L-BFGS}
\cite{ln89} as shown in Figure \ref{fig_convg2}. We can observe that
SGD has better performance than SVRG and L-BFGS in the beginning
until it achieves $\Ocal(\epsilon)$ accuracy, for the value of
$\epsilon$ consistent to what is indicated in Table
\ref{tab:pvalues_LR}. We note that the values of $M_{\epsilon}$ for
all datasets should not exceed $10^{-2}$ according to Table
\ref{tab:pvalues_LR}.

 \begin{figure}[h]
 \centering
   \includegraphics[width=0.49500\textwidth]{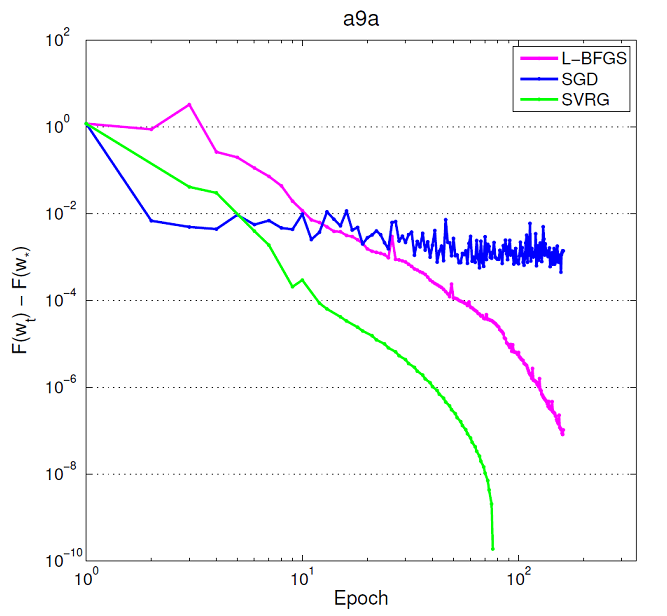}
   \includegraphics[width=0.49500\textwidth]{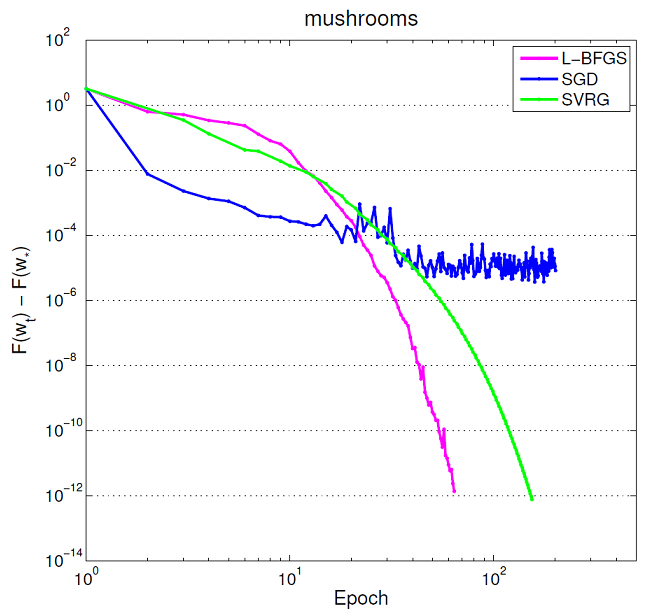}
   \includegraphics[width=0.49500\textwidth]{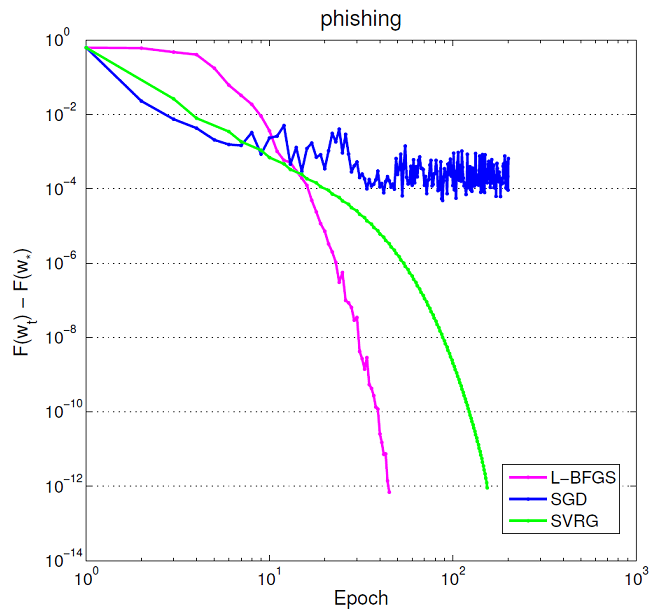}
   \includegraphics[width=0.49500\textwidth]{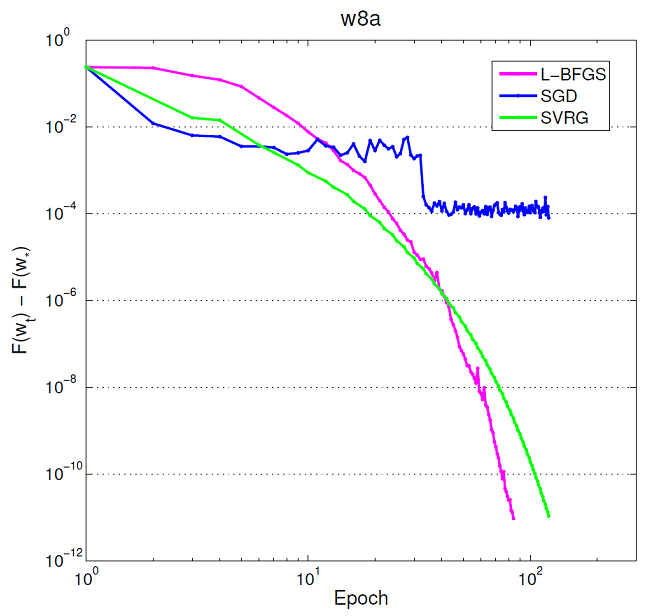}
   \caption{The convergence comparisons of SGD, SVRG, and L-BFGS}
  \label{fig_convg2}
 \end{figure}


%
%

\subsection{Neural Networks for Nonconvex Case}


For experiments with nonconvex problems we train DNNs using
 two standard network architectures: feed forward
network (FFN) and convolutional neural network (CNN). Configuration
of FNN includes $2$ dense layers each containing $256$ neurons
followed by a ReLU activation. The output layer consists of $c$
neurons with the softmax activation where $c$ is the number of classes. For CNN, we configure the
network to have $2$ convolutional layers followed by $2$ dense
layers. Convolutional layers contain a convolutional operator
followed by a ReLU activation and then a max pooling. The number of
filters of both the convolutional operators are set to $64$ and the
associated filter sizes are $5 \times 5$. Number of neurons in dense
layers are $384$ and $192$, respectively, and the activation used in
these layers is again ReLU. Throughout the simulations, we use popular datasets which include \texttt{MNIST} \footnote{http://yann.lecun.com/exdb/mnist/} (60000 training data images of size $28 \times 28$ contained in 10 classes), \texttt{SVHN} \footnote{http://ufldl.stanford.edu/housenumbers/} (73257 training images of size $32 \times 32$  contained in 10 classes), \texttt{CIFAR10} (50000 training color images of size $32 \times 32$ contained in 10 classes), and \texttt{CIFAR100} \footnote{https://www.cs.toronto.edu/~kriz/cifar.html} (50000 training color images of size $32 \times 32$ contained in 100 classes).


We trained the networks by the popular Adam algorithm with a
minibatch of size $32$ and reported the values of $p_{\epsilon}$ at
the last iteration $\m{w}_{Adam}$. In all our experiments, we did
not apply batch normalization and dropout techniques during the
training. Since the problem of interest is nonconvex, multiple local
minima could exist. We experimented with 10 seeds and reported the minimum result (minimum of the percentage of component functions with small gradient value). Table \ref{tab:pvalues} shows the values of
$p_\epsilon$ in terms of percentage for different thresholds
$\epsilon$. As is clear from the table, $p_\epsilon$ is close to $1$
for a sufficiently small $\epsilon$. It confirms that the majority
of component functions $f_i$ has negligible gradients at the final
solution of $F$.


\begin{table}[H]
\scriptsize \centering \caption{Percentage of $f_i$ with small
gradient value for different threshold $\epsilon$ (Neural Networks)}
\label{tab:pvalues}
\begin{tabular}{|c|c|l|c|c|c|c|c|c|}
\hline
Datasets & Architecture & $ \| \nabla F(\m{w}_{Adam}) \|^2$ &$\epsilon = 10^{-3}$ & $\epsilon = 10^{-5}$ &
$\epsilon = 10^{-7}$ & Train accuracy & $N$ & $M$\\
\hline
\textbf{MNIST} & \textbf{FFN}  & $1.3\cdot 10^{-15}$ & 100\% & 100\% & 99.99\% & 1.0000 & 6500 & $2.1\cdot 10^{-8}$ \\
\hline
\textbf{SVHN} &  \textbf{FFN} & $3.5\cdot 10^{-3}$ & 99.94\% & 99.92\% & 99.91\% & 0.9997 & 12000 & 500 \\
\hline
\textbf{MNIST} &  \textbf{CNN} & $1.6\cdot 10^{-17}$ & 100\% & 100\% & 100\% & 1.0000 & 6083 & $6.4\cdot 10^{-8}$ \\
 \hline
 \textbf{SVHN} &  \textbf{CNN} & $8.1\cdot 10^{-7}$ & 99.99\% & 99.98\% & 99.96\% & 0.9999 & 8068 & 0.18 \\
 \hline
\textbf{CIFAR10} &  \textbf{CNN} & $5.1\cdot 10^{-20}$ & 100\% & 100\% & 100\% & 1.0000 & 1205 & $8.7\cdot 10^{-14}$ \\
 \hline
 \textbf{CIFAR100} &  \textbf{CNN} & $5.5\cdot 10^{-2}$ & 99.50\% & 99.45\% & 99.42\% & 0.9988 & 984 & 3000 \\
 \hline
\end{tabular}
\end{table}

The value of $N$ is the estimation of $N$ in
\eqref{eq_ass_nonconvex_01}, which is shown in Section \ref{assump_verify}. We note that for some datasets and network structures,
Adam did not converge to a real local solution (SVHN-FFN and
CIFAR100-CNN) and Table \ref{tab:pvalues} shows only an
approximation of the behavior at the local solution.

\subsection{Nonconvex assumption verification}\label{assump_verify}

This section shows how to
estimate $N$. We are proving some numerical experiments to verify Assumption \ref{ass_nonconvex_00}. Let us define
\begin{align*}
r_t = \frac{\frac{1}{t+1} \sum_{k=0}^t\left( \frac{1}{n} \sum_{i=1}^n \| \nabla f_i(\m{w}_k) - \nabla f_i(\m{w}_*) \|^2 \right)}{\frac{1}{t+1}\sum_{k=0}^t \| F(\m{w}_k) \|^2}
\end{align*}

We show two plots to see behaviors of $r_t$ for MNIST (FFN) and
CIFAR10 (CNN) (others are reported in Table~\ref{tab:pvalues}. We
can observe from Figure \ref{fig_N} that $r_t$ is bounded above by a constant. (Note that $r_t \leq N$.) 

  \begin{figure}[h]
 \centering
 \includegraphics[width=0.49\textwidth]{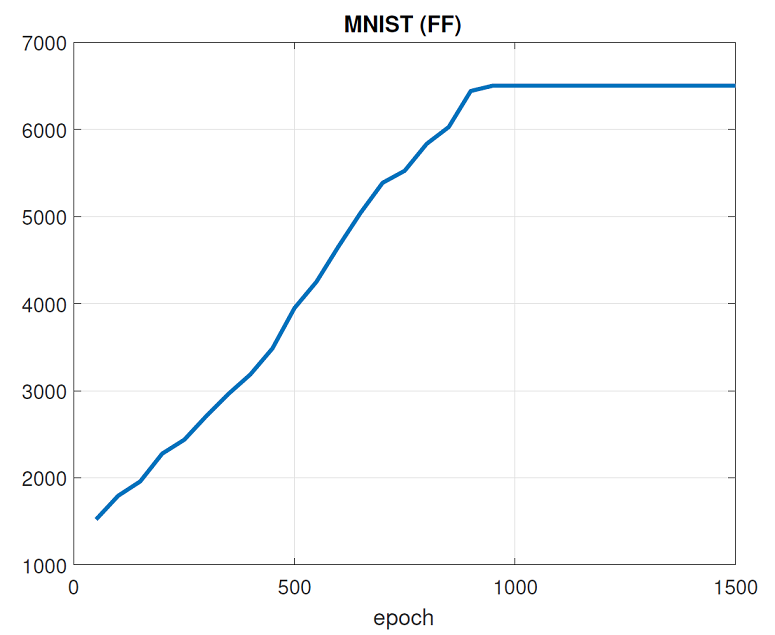}
 \includegraphics[width=0.49\textwidth]{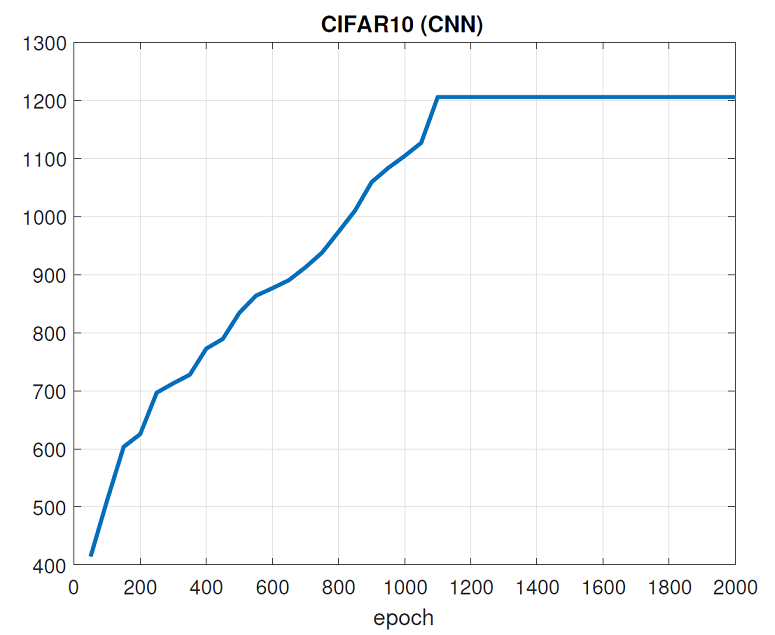}
   \caption{The behaviors of $r_t$}
  \label{fig_N}
 \end{figure}

\section{Conclusions}\label{sec_conclusion}
We have demonstrated that based on the behavior of the stochastic gradient estimates at or near the stationary points, SGD with fixed step size converges with the same rate as full gradient descent of the variance reduction methods, until it reaches the accuracy where the variance in the stochastic gradient estimates starts to dominate and prevents further convergence. In particular out assumption is that $1-\epsilon$ fraction of the stochastic gradient estimates have squared norm below $\epsilon$ at the solution. Note $\epsilon$ can be made arbitrarily small by increasing the  minibatch size $b$.
 Indeed we have the following lemma

\begin{lem}\label{lem:minibatch}
Let $\xi_1,\dots,\xi_b$ be i.i.d. with $\mathbb{E}[\nabla f(\m{w};\xi_i )] = \nabla F(\m{w})$, $i = 1,\dots,b$, for all $\m{w} \in \mathbb{R}^d$. Then,
\begin{align*}
\mathbb{E} \left[ \left\| \frac{1}{b} \sum_{i=1}^b \nabla f(\m{w}; \xi_i)  - \nabla F(\m{w}) \right\|^2  \right] = \frac{\mathbb{E}[ \| \nabla f(\m{w}; \xi_1) \|^2 ] - \| \nabla F(\m{w}) \|^2 }{b}. \tagthis \label{eq_useful_lemma}
\end{align*}
\end{lem}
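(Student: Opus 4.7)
The plan is to recognize this as the standard computation that the variance of an average of i.i.d. random vectors is $1/b$ times the variance of a single draw, applied to the vector-valued random variable $\nabla f(\m{w};\xi)$ with mean $\nabla F(\m{w})$.

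First I would introduce the centered variables $X_i := \nabla f(\m{w};\xi_i) - \nabla F(\m{w})$, which are i.i.d. with $\mathbb{E}[X_i]=0$ by the unbiasedness hypothesis. The left-hand side of \eqref{eq_useful_lemma} is then $\mathbb{E}\bigl\|\tfrac{1}{b}\sum_{i=1}^b X_i\bigr\|^2$, which I would expand as
\begin{equation*}
\frac{1}{b^2}\sum_{i=1}^b \mathbb{E}\|X_i\|^2 \;+\; \frac{1}{b^2}\sum_{i\neq j}\mathbb{E}\langle X_i, X_j\rangle.
\end{equation*}
The cross terms vanish because independence of $\xi_i,\xi_j$ gives $\mathbb{E}\langle X_i,X_j\rangle = \langle \mathbb{E}X_i,\mathbb{E}X_j\rangle = 0$, and by the identical-distribution assumption each $\mathbb{E}\|X_i\|^2$ equals $\mathbb{E}\|X_1\|^2$. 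This collapses the expression to $\tfrac{1}{b}\mathbb{E}\|X_1\|^2$.

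The second step is a bias-variance identity on a single sample: expanding the squared norm of $X_1 = \nabla f(\m{w};\xi_1) - \nabla F(\m{w})$ and using $\mathbb{E}[\nabla f(\m{w};\xi_1)] = \nabla F(\m{w})$, the cross term becomes $-2\|\nabla F(\m{w})\|^2$ and combines with the $+\|\nabla F(\m{w})\|^2$ term to yield
\begin{equation*}
\mathbb{E}\|X_1\|^2 \;=\; \mathbb{E}\bigl[\|\nabla f(\m{w};\xi_1)\|^2\bigr] - \|\nabla F(\m{w})\|^2.
\end{equation*}
Substituting this into the previous display gives exactly the right-hand side of \eqref{eq_useful_lemma}.

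There is no real obstacle here; the only thing to be slightly careful about is justifying that the vector-valued cross-term expectation factorizes over independent coordinates, which follows component-wise from scalar independence, or more directly from the fact that $\mathbb{E}\langle X_i, X_j\rangle = \sum_k \mathbb{E}[X_i^{(k)} X_j^{(k)}] = \sum_k \mathbb{E}[X_i^{(k)}]\mathbb{E}[X_j^{(k)}] = 0$. Everything else is routine algebra.
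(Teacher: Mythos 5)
Your proof is correct. It reaches the same identity by a direct expansion rather than the paper's route: the paper proves \eqref{eq_useful_lemma} by induction on $b$, peeling off the $m$-th summand at each step, using independence to kill the single cross term between $\sum_{i=1}^{m-1}(\nabla f(\m{w};\xi_i)-\nabla F(\m{w}))$ and $\nabla f(\m{w};\xi_m)-\nabla F(\m{w})$, and then invoking the inductive hypothesis; the base case $b=1$ is exactly your bias--variance identity $\mathbb{E}\|X_1\|^2 = \mathbb{E}[\|\nabla f(\m{w};\xi_1)\|^2]-\|\nabla F(\m{w})\|^2$. You instead expand $\bigl\|\tfrac{1}{b}\sum_i X_i\bigr\|^2$ over all pairs at once and annihilate all $i\neq j$ cross terms simultaneously via $\mathbb{E}\langle X_i,X_j\rangle=\langle\mathbb{E}X_i,\mathbb{E}X_j\rangle=0$. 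The two arguments use identical ingredients (independence plus zero mean to remove cross terms, identical distribution to equate the diagonal terms), so the difference is purely organizational: your version is shorter and avoids the induction scaffolding, while the paper's induction only ever handles one cross term at a time at the cost of rederiving the base identity inside the inductive step. Your remark on factorizing the vector-valued cross expectation componentwise is the right level of care; nothing is missing.
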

It is easy to see that by choosing large $b$ the relation $1-p_\epsilon\leq \epsilon$ can be achieved for smaller values of $\epsilon$. In the limit for arbitrarily small $\epsilon$  we recover full gradient method and its convergence behavior.

\bibliography{reference}
\bibliographystyle{abbrv}


\clearpage
\section*{\textbf{Appendix}}


\section*{Useful Lemmas}

Let $\{\xi_i\}_{i=1}^b$ be i.i.d. random variables with $ \mathbb{E} [ f(\m{w};\xi_i) ] = F(\m{w})$.
From Definition \ref{defn_p}, we have
\begin{align*}
\mathbb{E} \left[ \| \m{g}_* \|^2 \right] &= \mathbb{E} \left[ \| \m{g}_* \|^2 \ | \ \| \m{g}_* \|^2 \leq \epsilon  \right] \cdot \mathbb{P} \left\{ \| \m{g}_* \|^2 \leq \epsilon \right\} + \mathbb{E} \left[ \| \m{g}_* \|^2 \ | \ \| \m{g}_* \|^2 > \epsilon  \right] \cdot \mathbb{P} \left\{ \m{g}_* \|^2 > \epsilon \right\} \\
& \leq p_{\epsilon} \epsilon + (1-p_{\epsilon}) M_{\epsilon}, \tagthis \label{eq_main_ass}
\end{align*}
where $\m{g}_* = \frac{1}{b} \sum_{i=1}^b \nabla f (\m{w}_* ; \xi_i)$.

\begin{lem}[\cite{nesterov2004}]
Suppose that $\phi$ is $L$-smooth. Then,
\begin{gather*}
\phi(\m{w}) \leq \phi(\m{w}') + \nabla \phi(\m{w}')^\top(\m{w} - \m{w}') +
\frac{L}{2}\|\m{w} - \m{w}'\|^2, \ \forall \m{w}, \m{w}' \in
\mathbb{R}^d.
\tagthis\label{eq:Lsmooth}
\end{gather*}
\end{lem}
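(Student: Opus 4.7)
The plan is to derive the quadratic upper bound from the Lipschitz gradient condition using the fundamental theorem of calculus along the line segment between $\m{w}'$ and $\m{w}$, in the standard way. First I would define $h(t) = \phi(\m{w}' + t(\m{w} - \m{w}'))$ for $t \in [0,1]$, which is differentiable with $h'(t) = \nabla \phi(\m{w}' + t(\m{w}-\m{w}'))^\top (\m{w} - \m{w}')$, so that integrating yields
\begin{equation*}
\phi(\m{w}) - \phi(\m{w}') = \int_0^1 \nabla \phi(\m{w}' + t(\m{w}-\m{w}'))^\top (\m{w} - \m{w}') \, dt.
\end{equation*}

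Next I would add and subtract $\nabla \phi(\m{w}')^\top (\m{w} - \m{w}')$ inside the integral to split the right-hand side into the desired linear term $\nabla \phi(\m{w}')^\top (\m{w} - \m{w}')$ plus a remainder
\begin{equation*}
R = \int_0^1 \bigl(\nabla \phi(\m{w}' + t(\m{w}-\m{w}')) - \nabla \phi(\m{w}')\bigr)^\top (\m{w} - \m{w}') \, dt.
\end{equation*}
I would then bound $R$ by applying the Cauchy–Schwarz inequality inside the integral and invoking the $L$-smoothness condition \eqref{eq:Lsmooth_basic} to get $\|\nabla \phi(\m{w}' + t(\m{w}-\m{w}')) - \nabla \phi(\m{w}')\| \leq L t \|\m{w} - \m{w}'\|$, which yields $R \leq \|\m{w} - \m{w}'\|^2 L \int_0^1 t \, dt = \tfrac{L}{2}\|\m{w} - \m{w}'\|^2$. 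Combining this with the split identity produces the claimed inequality.

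There is no real obstacle here: the proof is a short and well-known application of the fundamental theorem of calculus, Cauchy–Schwarz, and the defining Lipschitz inequality for $\nabla \phi$. The only minor point worth being careful about is the justification of the integral representation, which requires $\phi \in C^1$; this is implicit in Definition \ref{def:L-smooth} since the stated norm inequality presumes $\nabla \phi$ exists and is continuous. Aside from this, the derivation proceeds by routine manipulation of the integral and no further ideas are needed.
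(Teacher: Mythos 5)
Your proof is correct and is the standard argument: the paper does not prove this lemma itself but cites it to Nesterov (2004), where essentially the same derivation---the fundamental theorem of calculus along the segment from $\m{w}'$ to $\m{w}$, followed by Cauchy--Schwarz and the Lipschitz bound \eqref{eq:Lsmooth_basic} on the remainder---is given. Nothing further is needed.
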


\begin{lem}[\cite{nesterov2004}]
Suppose that $\phi$ is $L$-smooth and convex. Then,
\begin{gather*}
(\nabla \phi(\m{w}) - \nabla \phi(\m{w}'))^\top (\m{w} - \m{w}') \geq \frac{1}{L} \| \nabla \phi(\m{w}) - \nabla \phi(\m{w}') \|^2, \ \forall \m{w}, \m{w}' \in
\mathbb{R}^d.
\tagthis\label{eq_Lsmooth_convex}
\end{gather*}
\end{lem}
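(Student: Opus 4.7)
The plan is to prove the co-coercivity inequality using the standard trick of constructing an auxiliary function for which one of the two points is a global minimizer, and then exploiting the descent lemma (inequality \eqref{eq:Lsmooth}) at that minimizer. The symmetry between the two points $\m{w}$ and $\m{w}'$ is then restored by adding two one-sided inequalities.

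Concretely, I would fix $\m{w}, \m{w}' \in \mathbb{R}^d$ and define the auxiliary function $h(\m{u}) := \phi(\m{u}) - \nabla \phi(\m{w}')^\top \m{u}$. Since $\phi$ is $L$-smooth and convex, so is $h$, and moreover $\nabla h(\m{u}) = \nabla \phi(\m{u}) - \nabla \phi(\m{w}')$, which means $\nabla h(\m{w}') = 0$. Convexity of $h$ then makes $\m{w}'$ a global minimizer of $h$. Next I would apply the descent lemma \eqref{eq:Lsmooth} to $h$ at the point $\m{u} = \m{w}$ with the gradient step $\m{w} - \frac{1}{L}\nabla h(\m{w})$, obtaining
\begin{align*}
h\!\left(\m{w} - \tfrac{1}{L}\nabla h(\m{w})\right) \leq h(\m{w}) - \tfrac{1}{2L}\|\nabla h(\m{w})\|^2.
\end{align*}
Because $\m{w}'$ minimizes $h$, the left-hand side is at least $h(\m{w}')$, so after substituting the definition of $h$ this gives the one-sided bound
\begin{align*}
\phi(\m{w}) - \phi(\m{w}') - \nabla \phi(\m{w}')^\top (\m{w} - \m{w}') \geq \tfrac{1}{2L}\|\nabla \phi(\m{w}) - \nabla \phi(\m{w}')\|^2.
\end{align*}

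To finish, I would repeat the same argument with the roles of $\m{w}$ and $\m{w}'$ swapped (using $\tilde h(\m{u}) = \phi(\m{u}) - \nabla \phi(\m{w})^\top \m{u}$), yielding a mirror-image inequality in which $\nabla \phi(\m{w})^\top(\m{w}' - \m{w})$ appears. Adding the two inequalities eliminates the function-value terms $\phi(\m{w})$ and $\phi(\m{w}')$ and leaves exactly $(\nabla \phi(\m{w}) - \nabla \phi(\m{w}'))^\top (\m{w} - \m{w}') \geq \frac{1}{L}\|\nabla \phi(\m{w}) - \nabla \phi(\m{w}')\|^2$, which is \eqref{eq_Lsmooth_convex}.

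There is no real obstacle here; the only delicate step is justifying that $\m{w}'$ is a global minimizer of $h$, which requires both convexity (so that a vanishing gradient implies a global minimum) and the fact that $h$ inherits convexity and $L$-smoothness from $\phi$ (a linear shift changes neither). Everything else is a direct combination of the descent lemma with the minimizer property, followed by a symmetrization step.
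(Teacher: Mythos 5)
Your proof is correct: the auxiliary function $h(\m{u}) = \phi(\m{u}) - \nabla\phi(\m{w}')^\top\m{u}$ inherits convexity and $L$-smoothness, has $\m{w}'$ as a global minimizer, and combining the descent lemma \eqref{eq:Lsmooth} with the minimizer property and then symmetrizing yields exactly \eqref{eq_Lsmooth_convex}. The paper itself only cites this lemma from Nesterov without proof, but your argument is the standard one and is precisely the same auxiliary-function technique the paper uses in its proof of the related bound \eqref{basic_prop_02}, so there is nothing to add.
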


\begin{lem}[\cite{nesterov2004}]
Suppose that $\phi$ is $L$-smooth and convex. Then,
\begin{gather*}
\| \nabla \phi(\m{w}) \|^2 \leq 2 L ( \phi(\m{w}) - \phi(\m{w}_*) ), \forall
\m{w} \in \mathbb{R}^d, \tagthis \label{basic_prop_01}
\end{gather*}
where $\m{w}_* = \arg \min_{\m{w}} \phi(\m{w})$.
\end{lem}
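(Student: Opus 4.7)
The plan is to recognize this as the standard variance-of-the-sample-mean identity applied to the vector-valued, mean-zero, i.i.d.\ random variables $X_i := \nabla f(\m{w};\xi_i) - \nabla F(\m{w})$, and then separately simplify $\mathbb{E}[\|X_1\|^2]$ using the unbiasedness hypothesis. The argument is elementary and purely algebraic once one uses bilinearity of the inner product and independence to kill cross terms.

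First I would write
\[
\left\|\tfrac{1}{b}\sum_{i=1}^b \nabla f(\m{w};\xi_i) - \nabla F(\m{w})\right\|^2 \;=\; \left\|\tfrac{1}{b}\sum_{i=1}^b X_i\right\|^2 \;=\; \tfrac{1}{b^2}\sum_{i=1}^b\sum_{j=1}^b \langle X_i, X_j\rangle,
\]
take expectation, and split the double sum into diagonal ($i=j$) and off-diagonal ($i\neq j$) parts. For $i\neq j$, independence of $\xi_i$ and $\xi_j$ gives $\mathbb{E}[\langle X_i, X_j\rangle] = \langle \mathbb{E}[X_i], \mathbb{E}[X_j]\rangle = 0$ since each $X_i$ has mean zero by construction. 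The diagonal contributes $b\,\mathbb{E}[\|X_1\|^2]$ (using the identical distribution), so the left-hand side reduces to $\tfrac{1}{b}\mathbb{E}[\|X_1\|^2]$.

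Second, I would expand
\[
\mathbb{E}[\|X_1\|^2] = \mathbb{E}[\|\nabla f(\m{w};\xi_1)\|^2] - 2\,\mathbb{E}\bigl[\langle \nabla f(\m{w};\xi_1), \nabla F(\m{w})\rangle\bigr] + \|\nabla F(\m{w})\|^2,
\]
and pull the deterministic vector $\nabla F(\m{w})$ outside the expectation in the cross term. Unbiasedness $\mathbb{E}[\nabla f(\m{w};\xi_1)] = \nabla F(\m{w})$ makes the cross term equal to $2\|\nabla F(\m{w})\|^2$, leaving $\mathbb{E}[\|X_1\|^2] = \mathbb{E}[\|\nabla f(\m{w};\xi_1)\|^2] - \|\nabla F(\m{w})\|^2$. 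Dividing by $b$ yields exactly \eqref{eq_useful_lemma}.

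There is no real obstacle; the only point of minor care is ensuring the off-diagonal inner products $\langle X_i, X_j\rangle$ factorize under expectation, which follows from independence of $\xi_i$ and $\xi_j$ together with Fubini (or componentwise expansion of the inner product). Everything else is bookkeeping.
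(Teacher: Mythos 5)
Your proposal does not prove the statement in question. The lemma to be established is the inequality $\| \nabla \phi(\m{w}) \|^2 \leq 2 L ( \phi(\m{w}) - \phi(\m{w}_*) )$ for an $L$-smooth convex $\phi$ with minimizer $\m{w}_*$, i.e.\ \eqref{basic_prop_01}. What you have written is instead a proof of the minibatch variance identity \eqref{eq_useful_lemma} (Lemma \ref{lem:minibatch}): you introduce the mean-zero variables $X_i = \nabla f(\m{w};\xi_i) - \nabla F(\m{w})$, kill the off-diagonal terms by independence, and explicitly conclude that ``dividing by $b$ yields exactly \eqref{eq_useful_lemma}.'' That argument is correct for that other lemma (and is arguably cleaner than the paper's induction on $b$), but it is entirely off-target here: nothing in it uses $L$-smoothness, convexity, or the optimality gap $\phi(\m{w}) - \phi(\m{w}_*)$, and no step of it can be repurposed to bound a gradient norm by a function-value gap.

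For completeness, the intended result follows in two lines from the descent inequality \eqref{eq:Lsmooth}: since $\m{w}_*$ is the global minimizer,
\[
\phi(\m{w}_*) \;\le\; \phi\!\left(\m{w} - \tfrac{1}{L}\nabla\phi(\m{w})\right) \;\le\; \phi(\m{w}) - \tfrac{1}{L}\|\nabla\phi(\m{w})\|^2 + \tfrac{L}{2}\cdot\tfrac{1}{L^2}\|\nabla\phi(\m{w})\|^2 \;=\; \phi(\m{w}) - \tfrac{1}{2L}\|\nabla\phi(\m{w})\|^2,
\]
and rearranging gives \eqref{basic_prop_01}. The paper itself imports this lemma from \cite{nesterov2004} without proof, so there is no in-paper argument to compare against; in any case your submitted proof would have to be replaced wholesale rather than repaired.
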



\begin{lem}[\cite{nesterov2004}]
Suppose that $\phi$ is $\mu$-strongly convex. Then,
\begin{align*}
2\mu [ \phi(\m{w}) - \phi(\m{w}_*) ] \leq \| \nabla \phi(\m{w}) \|^2 \ , \
\forall \m{w} \in \mathbb{R}^d, \tagthis\label{eq:stronglyconvex}
\end{align*}
where $\m{w}_* = \arg \min_{\m{w}} \phi(\m{w})$.
\end{lem}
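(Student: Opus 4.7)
The plan is a direct variance computation for a sum of i.i.d. centered random vectors, exploiting pairwise independence to kill cross-terms. First, I would introduce the centered variables $X_i := \nabla f(\m{w};\xi_i) - \nabla F(\m{w})$ for $i = 1,\dots,b$. By the hypothesis $\mathbb{E}[\nabla f(\m{w};\xi_i)] = \nabla F(\m{w})$, each $X_i$ has zero mean, and the $X_i$'s inherit i.i.d.\ from the $\xi_i$'s (applying a deterministic measurable function to i.i.d.\ random variables). The left-hand side of \eqref{eq_useful_lemma} then reads $\mathbb{E}\bigl\|\tfrac{1}{b}\sum_{i=1}^b X_i\bigr\|^2$.

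Next, I would expand the squared Euclidean norm as a double sum of inner products and push the expectation inside by linearity:
\begin{align*}
\mathbb{E}\left\|\frac{1}{b}\sum_{i=1}^b X_i\right\|^2 = \frac{1}{b^2}\sum_{i=1}^b\sum_{j=1}^b \mathbb{E}\bigl[X_i^\top X_j\bigr].
\end{align*}
For $i \ne j$, independence of $X_i$ and $X_j$ together with $\mathbb{E}[X_i] = \mathbb{E}[X_j] = 0$ gives $\mathbb{E}[X_i^\top X_j] = \mathbb{E}[X_i]^\top\mathbb{E}[X_j] = 0$. Only the $b$ diagonal terms survive, and by identical distribution each equals $\mathbb{E}\|X_1\|^2$, yielding $\frac{1}{b}\mathbb{E}\|X_1\|^2$.

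Finally, I would expand $\|X_1\|^2 = \|\nabla f(\m{w};\xi_1)\|^2 - 2\nabla f(\m{w};\xi_1)^\top \nabla F(\m{w}) + \|\nabla F(\m{w})\|^2$ and take expectation. The cross term becomes $-2\,\mathbb{E}[\nabla f(\m{w};\xi_1)]^\top\nabla F(\m{w}) = -2\|\nabla F(\m{w})\|^2$ by the unbiasedness hypothesis, so $\mathbb{E}\|X_1\|^2 = \mathbb{E}\|\nabla f(\m{w};\xi_1)\|^2 - \|\nabla F(\m{w})\|^2$. Dividing by $b$ delivers \eqref{eq_useful_lemma}.

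There is no real obstacle here: the argument is the standard variance-of-the-sample-mean identity adapted to $\mathbb{R}^d$-valued random vectors, and the only ingredient beyond linearity of expectation is that independence plus zero mean kills the off-diagonal inner products. The only minor care needed is to state that the $X_i$'s are indeed i.i.d.\ (so that $X_i$ and $X_j$ are independent for $i\ne j$), which the footnote attached to Algorithm \ref{sgd_algorithm} already justifies.
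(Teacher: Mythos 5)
Your proof does not address the statement at hand. The statement you were asked to prove is the strong-convexity gradient inequality (a Polyak--\L{}ojasiewicz-type bound): if $\phi$ is $\mu$-strongly convex with minimizer $\m{w}_*$, then $2\mu[\phi(\m{w}) - \phi(\m{w}_*)] \leq \|\nabla\phi(\m{w})\|^2$ for all $\m{w}$. What you have written instead is a proof of Lemma \ref{lem:minibatch}, the variance-of-the-sample-mean identity \eqref{eq_useful_lemma} for the minibatch gradient estimator. These are entirely different results: one is a deterministic consequence of strong convexity with no probability involved, the other is a probabilistic computation with no convexity involved. Nothing in your argument --- centering the stochastic gradients, killing cross-terms by independence, expanding $\|X_1\|^2$ --- bears on the inequality to be proved.

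For the record, the intended statement follows from the strong-convexity inequality \eqref{eq:stronglyconvex_00}: for any $\m{w}$ and $\m{w}'$,
\begin{align*}
\phi(\m{w}') \geq \phi(\m{w}) + \nabla\phi(\m{w})^\top(\m{w}' - \m{w}) + \frac{\mu}{2}\|\m{w}' - \m{w}\|^2 .
\end{align*}
Minimizing the right-hand side over $\m{w}'$ (the minimum is attained at $\m{w}' = \m{w} - \frac{1}{\mu}\nabla\phi(\m{w})$) and then taking $\m{w}' = \m{w}_*$ on the left gives $\phi(\m{w}_*) \geq \phi(\m{w}) - \frac{1}{2\mu}\|\nabla\phi(\m{w})\|^2$, which rearranges to the claim. (The paper itself cites this lemma from the literature without proof, but that is the standard one-line argument.) Your variance computation is correct as a proof of Lemma \ref{lem:minibatch} --- and matches the content of the paper's induction-based proof of that lemma while being cleaner --- but it cannot be credited here.
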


\begin{lem}[\cite{SVRG}]
Suppose that $f(\m{w};\xi)$ is $L$-smooth and
convex for every realization of $\xi$. Then,
\begin{align*}
\mathbb{E} [\| \nabla f(\m{w}; \xi) - \nabla
f (\m{w}_*; \xi) \|^2] \leq 2 L [F(\m{w}) - F(\m{w}_*)], \ \forall
\m{w} \in \mathbb{R}^d, \tagthis \label{basic_prop_02}
\end{align*}
where $\xi$ is a random variable, and $\m{w}_* = \arg \min_{\m{w}} F(\m{w})$.
\end{lem}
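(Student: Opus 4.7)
The plan is to reduce the bound to the standard smooth-convex inequality \eqref{basic_prop_01}, which states that an $L$-smooth convex $\phi$ with minimizer $\m{w}_{\min}$ satisfies $\|\nabla\phi(\m{w})\|^2 \leq 2L(\phi(\m{w})-\phi(\m{w}_{\min}))$. The obstacle to applying this directly to $f(\cdot;\xi)$ is that $\m{w}_*$ minimizes only the expectation $F$, not any individual realization, so $\nabla f(\m{w}_*;\xi)$ is generally nonzero and $\m{w}_*$ is not a minimizer of $f(\cdot;\xi)$. To bypass this, for each realization $\xi$ I would introduce the auxiliary function
\begin{align*}
\phi_\xi(\m{w}) := f(\m{w};\xi) - f(\m{w}_*;\xi) - \nabla f(\m{w}_*;\xi)^\top(\m{w}-\m{w}_*),
\end{align*}
which is still $L$-smooth and convex (since the subtracted term is affine), and for which $\nabla\phi_\xi(\m{w}_*)=\nabla f(\m{w}_*;\xi)-\nabla f(\m{w}_*;\xi)=0$ by construction. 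Convexity then promotes this stationary point to a global minimizer of $\phi_\xi$, with $\phi_\xi(\m{w}_*)=0$.

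Applying \eqref{basic_prop_01} to $\phi_\xi$ gives, for every realization $\xi$,
\begin{align*}
\|\nabla f(\m{w};\xi)-\nabla f(\m{w}_*;\xi)\|^2 &= \|\nabla\phi_\xi(\m{w})\|^2 \leq 2L\,\phi_\xi(\m{w}) \\
&= 2L\bigl[f(\m{w};\xi)-f(\m{w}_*;\xi)-\nabla f(\m{w}_*;\xi)^\top(\m{w}-\m{w}_*)\bigr].
\end{align*}
I would then take expectation over $\xi$. By unbiasedness, $\mathbb{E}[f(\m{w};\xi)]=F(\m{w})$ and $\mathbb{E}[\nabla f(\m{w}_*;\xi)]=\nabla F(\m{w}_*)$. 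Since $\m{w}_*=\arg\min_{\m{w}} F(\m{w})$ is a stationary point of $F$, we have $\nabla F(\m{w}_*)=0$, so the inner-product cross term vanishes in expectation, yielding the desired $\mathbb{E}[\|\nabla f(\m{w};\xi)-\nabla f(\m{w}_*;\xi)\|^2]\leq 2L[F(\m{w})-F(\m{w}_*)]$.

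The only delicate step is the vanishing of the cross term, which hinges on unbiasedness of $\nabla f(\cdot;\xi)$ combined with first-order optimality at $\m{w}_*$; without taking expectations one would be left with the inconvenient residual $\nabla f(\m{w}_*;\xi)^\top(\m{w}-\m{w}_*)$ that is not sign-definite. Everything else is a direct application of \eqref{basic_prop_01} to the correctly shifted function $\phi_\xi$.
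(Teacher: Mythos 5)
Your proposal is correct and follows essentially the same route as the paper's proof: the paper defines the identical shifted function $h(\m{w};\xi) = f(\m{w};\xi) - f(\m{w}_*;\xi) - \nabla f(\m{w}_*;\xi)^\top(\m{w}-\m{w}_*)$, observes that $\m{w}_*$ is its minimizer, derives the same pointwise bound, and kills the cross term in expectation via $\nabla F(\m{w}_*)=0$. The only cosmetic difference is that the paper re-derives the inequality $\|\nabla h\|^2 \le 2L\,h$ inline from the descent lemma \eqref{eq:Lsmooth}, whereas you invoke \eqref{basic_prop_01} directly; both are valid.
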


\begin{proof}
Given any $\xi$, for all $\m{w} \in \mathbb{R}^d$, consider
\begin{align*}
h(\m{w};\xi) := f(\m{w};\xi) - f(\m{w}_*;\xi) - \nabla f(\m{w}_*; \xi)^\top(\m{w} - \m{w}_*).
\end{align*}

Since $h(\m{w};\xi)$ is convex by $\m{w}$ and $\nabla h(\m{w}_*;\xi) = 0$, we have $h(\m{w}_*;\xi) = \min_{\m{w}} h(\m{w};\xi)$. Hence,
\begin{align*}
0 = h(\m{w}_*;\xi) & \leq \min_{\eta} \left[ h(\m{w} - \eta \nabla h(\m{w};\xi) ;\xi) \right] \\
& \overset{\eqref{eq:Lsmooth}}{\leq} \min_{\eta} \left[ h(\m{w}; \xi) - \eta \| \nabla h(\m{w}; \xi) \|^2 + \frac{L\eta^2}{2} \| \nabla h(\m{w}; \xi) \|^2    \right] \\
& = h(\m{w}; \xi) - \frac{1}{2L} \| \nabla h(\m{w}; \xi) \|^2.
\end{align*}

Hence,
\begin{align*}
\| \nabla f(\m{w}; \xi) - \nabla
f (\m{w}_*; \xi) \|^2 \leq 2 L [ f(\m{w};\xi) - f(\m{w}_*;\xi) - \nabla f(\m{w}_*; \xi)^\top(\m{w} - \m{w}_*) ].
\end{align*}

Taking the expectation with respect to $\xi$, we have
\begin{align*}
\mathbb{E} [\| \nabla f(\m{w}; \xi) - \nabla f (\m{w}_*; \xi) \|^2] \leq 2 L [F(\m{w}) - F(\m{w}_*)]. \ \ \ \ \ \ \ \ \ \ \ \ \ \ \ \qedhere
\end{align*}

\end{proof}

\section*{Proof of Lemma \ref{lem:minibatch}}

\textbf{Lemma \ref{lem:minibatch}}. \textit{Let $\xi_1,\dots,\xi_b$ be i.i.d. with $\mathbb{E}[\nabla f(\m{w};\xi_i )] = \nabla F(\m{w})$, $i = 1,\dots,b$, for all $\m{w} \in \mathbb{R}^d$. Then,}
\begin{align*}
\mathbb{E} \left[ \left\| \frac{1}{b} \sum_{i=1}^b \nabla f(\m{w}; \xi_i)  - \nabla F(\m{w}) \right\|^2  \right] = \frac{\mathbb{E}[ \| \nabla f(\m{w}; \xi_1) \|^2 ] - \| \nabla F(\m{w}) \|^2 }{b}. \tagthis \label{eq_useful_lemma}
\end{align*}

\begin{proof}
We are going to use mathematical induction to prove the result. With $b = 1$, it is easy to see
\begin{align*}
\mathbb{E} \left[ \left\| \nabla f(\m{w}; \xi_1)  - \nabla F(\m{w}) \right\|^2  \right] &= \mathbb{E}[ \| \nabla f(\m{w}; \xi_1) \|^2 ] - 2 \| \nabla F(\m{w}) \|^2 + \| \nabla F(\m{w}) \|^2 \\ &= \mathbb{E}[ \| \nabla f(\m{w}; \xi_1) \|^2 ] - \| \nabla F(\m{w}) \|^2 .
\end{align*}

Let assume that it is true with $b = m - 1$, we are going to show it is also true with $b = m$. We have
\begin{align*}
& \mathbb{E} \left[ \left\| \frac{1}{m} \sum_{i=1}^m \nabla f(\m{w}; \xi_i)  - \nabla F(\m{w}) \right\|^2  \right] \\
&= \mathbb{E} \left[ \left\|  \frac{\sum_{i=1}^{m-1} \nabla f(\m{w}; \xi_i) - (m-1) \nabla F(\m{w}) + (\nabla f(\m{w}; \xi_m) - \nabla F(\m{w}) ) }{m} \right\|^2 \right] \\
&= \frac{1}{m^2} \left(  \mathbb{E}\left[ \left\|  \sum_{i=1}^{m-1} \nabla f(\m{w}; \xi_i) - (m-1) \nabla F(\m{w}) \right\|^2  \right] + \mathbb{E} \left[ \left\| \nabla f(\m{w}; \xi_m) - \nabla F(\m{w}) \right\|^2 \right] \right) \\
& \qquad + \frac{1}{m} \mathbb{E} \left[ 2 \left(  \sum_{i=1}^{m-1} \nabla f(\m{w}; \xi_i) - (m-1) \nabla F(\m{w}) \right)^\top \left( \nabla f(\m{w}; \xi_m) - \nabla F(\m{w}) \right) \right] \\
&= \frac{1}{m^2} \left(  \mathbb{E}\left[ \left\|  \sum_{i=1}^{m-1} \nabla f(\m{w}; \xi_i) - (m-1) \nabla F(\m{w}) \right\|^2  \right] + \mathbb{E} \left[ \left\| \nabla f(\m{w}; \xi_m) - \nabla F(\m{w}) \right\|^2 \right] \right) \\
&= \frac{1}{m^2} \left( (m-1) \mathbb{E} [ \| \nabla f(\m{w}; \xi_1) \|^2] - (m-1)\| \nabla F(\m{w}) \|^2 + \mathbb{E} [ \| \nabla f(\m{w}; \xi_m) \|^2] - \| \nabla F(\m{w}) \|^2 \right) \\
&= \frac{1}{m} \left( \mathbb{E} [ \| \nabla f(\m{w}; \xi_1) \|^2] - \| \nabla F(\m{w}) \|^2  \right).
\end{align*}
The third and the last equalities follow since $\xi_1,\dots,\xi_b$ be i.i.d. with $\mathbb{E}[\nabla f(\m{w};\xi_i )] = \nabla F(\m{w})$. Therefore, the desired result is achieved.
\end{proof}

\section*{Proof of Theorem \ref{thm_sgd_str_convex_03}}

\textbf{Theorem \ref{thm_sgd_str_convex_03}}. \textit{Suppose that $F(\m{w})$ is $\mu$-strongly convex and $f(\m{w};\xi)$ is $L$-smooth and convex for every realization of $\xi$. Consider
Algorithm~\ref{sgd_algorithm} with $\eta \leq \frac{1}{L}$. Then, for any $\epsilon>0$
\begin{align*}
\mathbb{E} [ \|\m{w}_{t} - \m{w}_* \|^2] \leq (1- \mu\eta(1-\eta L) )^t
 \| \m{w}_{0} - \m{w}_* \|^2  + \frac{2\eta }{ \mu(1-\eta L)} p_{\epsilon} \epsilon +
\frac{2\eta}{ \mu(1-\eta L)} (1-p_{\epsilon})M_{\epsilon},
\end{align*}
where $\m{w}_* = \arg \min_{\m{w}} F(\m{w})$, and $p_{\epsilon}$ and $M_{\epsilon}$ are defined in \eqref{eq_Prob_p} and \eqref{eq_upper_bound}, respectively.}
\begin{proof}
We have
\begin{align*}
& \|\m{w}_{t+1} - \m{w}_* \|^2 = \left \|\m{w}_t - \eta \m{g}_t - \m{w}_* \right \|^2 \\
&= \| \m{w}_{t} - \m{w}_* \|^2 - 2\eta \m{g}_t^\top (\m{w}_t - \m{w}_*) + \eta^2 \left \| \m{g}_t \right \|^2 \\
&= \| \m{w}_{t} - \m{w}_* \|^2 - 2\eta \frac{1}{b} \sum_{i=1}^b \nabla f(\m{w}_t; \xi_{t,i})^\top (\m{w}_t - \m{w}_*) + \eta^2 \left \| \frac{1}{b} \sum_{i=1}^b \nabla f(\m{w}_t; \xi_{t,i}) \right \|^2 \\
&\leq \| \m{w}_{t} - \m{w}_* \|^2 - 2\eta \frac{1}{b} \sum_{i=1}^b \nabla f(\m{w}_t; \xi_{t,i})^\top (\m{w}_t - \m{w}_*) + 2 \eta^2 \left \| \frac{1}{b} \sum_{i=1}^b \left( \nabla f(\m{w}_t; \xi_{t,i}) - \nabla f(\m{w}_*; \xi_{t,i}) \right) \right \|^2 \\ & \qquad + 2 \eta^2 \left \| \frac{1}{b} \sum_{i=1}^b \nabla f(\m{w}_*; \xi_{t,i}) \right \|^2 \\
&\leq \| \m{w}_{t} - \m{w}_* \|^2 - 2\eta \frac{1}{b} \sum_{i=1}^b \nabla f(\m{w}_t; \xi_{t,i})^\top (\m{w}_t - \m{w}_*) + 2 \eta^2 \frac{1}{b} \sum_{i=1}^b \left \|  \nabla f(\m{w}_t; \xi_{t,i}) - \nabla f(\m{w}_*; \xi_{t,i})  \right \|^2 \\ & \qquad + 2 \eta^2 \left \| \frac{1}{b} \sum_{i=1}^b \nabla f(\m{w}_*; \xi_{t,i}) \right \|^2 \tagthis \label{eq_in_thm1} \\
&\overset{\eqref{eq_Lsmooth_convex}}{\leq} \| \m{w}_{t} - \m{w}_* \|^2 - 2\eta \frac{1}{b} \sum_{i=1}^b \nabla f(\m{w}_t; \xi_{t,i})^\top (\m{w}_t - \m{w}_*) + 2 \eta^2 L \frac{1}{b} \sum_{i=1}^b (\nabla f(\m{w}_t; \xi_{t,i}) - \nabla f(\m{w}_*; \xi_{t,i}))^\top (\m{w}_t - \m{w}_*) \\ & \qquad + 2 \eta^2 \left \| \frac{1}{b} \sum_{i=1}^b \nabla f(\m{w}_*; \xi_{t,i}) \right \|^2.
\end{align*}

Hence, by taking the expectation, conditioned on $\mathcal{F}_t =
\sigma(\m{w}_0,\m{w}_1,\dots,\m{w}_t)$ (which is the
$\sigma$-algebra generated by $\m{w}_0,\m{w}_1,\dots,\m{w}_t$),  we have
\begin{align*}
\mathbb{E} [ \|\m{w}_{t+1} - \m{w}_* \|^2  | \mathcal{F}_t ] & \leq  \| \m{w}_{t} - \m{w}_* \|^2 - 2\eta ( 1 - \eta L) \nabla F(\m{w}_t)^\top (\m{w}_t - \m{w}_*) + 2 \eta^2 \mathbb{E} \left[ \left \| \frac{1}{b} \sum_{i=1}^b \nabla f(\m{w}_*; \xi_{t,i}) \right \|^2 \Big | \mathcal{F}_t \right] \\
& \overset{\eqref{eq:stronglyconvex_00}}{\leq}  (1- \mu\eta( 1 - \eta L)) \| \m{w}_{t} - \m{w}_* \|^2 - 2\eta(1 - \eta L) [F(\m{w}_t) - F(\m{w}_*)] \\ & \qquad \qquad + 2 \eta^2 \mathbb{E} \left[ \left \| \frac{1}{b} \sum_{i=1}^b \nabla f(\m{w}_*; \xi_{t,i}) \right \|^2 \right] \\
& \overset{\eta \leq 1/L,\eqref{eq_main_ass}}{\leq} (1- \mu\eta(1-\eta L)) \| \m{w}_{t} - \m{w}_* \|^2 + 2\eta^2 p_{\epsilon} \epsilon + 2\eta^2 (1-p_{\epsilon}) M_{\epsilon}.
\end{align*}
The first inequality follows since
\begin{align*}
\mathbb{E}\left[ \frac{1}{b} \sum_{i=1}^b \nabla f(\m{w}_t; \xi_{t,i}) \Big | \mathcal{F}_t \right] = \mathbb{E}\left[ \frac{1}{b} \sum_{i=1}^b \left( \nabla f (\m{w}_t; \xi_{t,i}) - \nabla f (\m{w}_*; \xi_{t,i}) \right) \Big | \mathcal{F}_t \right] = \nabla F(\m{w}_t).
\end{align*}

We note in the second equality that $\mathbb{E} \left[ \left \| \frac{1}{b} \sum_{i=1}^b \nabla f(\m{w}_*; \xi_{t,i}) \right \|^2 \Big | \mathcal{F}_t \right] = \mathbb{E} \left[ \left \| \frac{1}{b} \sum_{i=1}^b \nabla f(\m{w}_*; \xi_{t,i}) \right \|^2 \right]$ since $\xi_{t,i}$ is independent of $\mathcal{F}_t$. By taking the expectation for both sides of the above equation, we obtain
\begin{align*}
\mathbb{E} [ \|\m{w}_{t+1} - \m{w}_* \|^2] \leq (1- \mu\eta(1-\eta L))
\mathbb{E} [ \| \m{w}_{t} - \m{w}_* \|^2 ] + 2\eta^2 p_{\epsilon} \epsilon +
2\eta^2 (1-p_{\epsilon})M_{\epsilon}.
\end{align*}

Hence, we conclude
\begin{align*}
\mathbb{E} [ \|\m{w}_{t+1} - \m{w}_* \|^2] \leq (1- \mu\eta(1-\eta L) )^{t+1}
\| \m{w}_{0} - \m{w}_* \|^2  + \frac{2\eta }{\mu(1-\eta L)} p_{\epsilon} \epsilon +
\frac{2\eta}{\mu(1-\eta L)} (1-p_{\epsilon})M_{\epsilon}.
\end{align*}
\end{proof}
%
%
%

\section*{Proof of Theorem \ref{thm_sgd_convex_01}}

\textbf{Theorem \ref{thm_sgd_convex_01}}. \textit{Suppose that $f(\m{w};\xi)$ is $L$-smooth and convex for every realization of $\xi$.
Consider Algorithm~\ref{sgd_algorithm} with $\eta < \frac{1}{L}$. Then for any $\epsilon>0$, we have
\begin{align*}
\frac{1}{t+1} \sum_{k=0}^t \mathbb{E} [F(\m{w}_k) - F(\m{w}_*)] &
\leq \frac{\| \m{w}_0 - \m{w}_* \|^2}{2\eta(1 - \eta L) t}  +
\frac{\eta}{(1 - \eta L)} p_{\epsilon} \epsilon + \frac{\eta M_{\epsilon}}{(1 - \eta L)} (1 - p_{\epsilon}), 
\end{align*}
where $\m{w}_* $ is any optimal solution of $F(\m{w})$, and  $p_{\epsilon}$ and $M_{\epsilon}$ are defined in \eqref{eq_Prob_p} and \eqref{eq_upper_bound}, respectively.}

\begin{proof}
If $\phi$ is convex, then
\begin{align*}
\phi(\m{w}) - \phi(\m{w}') \geq \nabla \phi(\m{w}')^\top (\m{w} - \m{w}'), \
\forall \m{w}, \m{w}' \in \mathbb{R}^d. \tagthis\label{eq:convex_00}
\end{align*}

From the proof of Theorem \ref{thm_sgd_str_convex_03}, we could have
\begin{align*}
\mathbb{E} [ \|\m{w}_{t+1} - \m{w}_* \|^2  | \mathcal{F}_t ] & \leq  \| \m{w}_{t} - \m{w}_* \|^2 - 2\eta ( 1 - \eta L) \nabla F(\m{w}_t)^\top (\m{w}_t - \m{w}_*) \\ & \qquad + 2 \eta^2 \mathbb{E} \left[ \left \| \frac{1}{b} \sum_{i=1}^b \nabla f(\m{w}_*; \xi_{t,i}) \right \|^2 \Big | \mathcal{F}_t \right] \\
& \overset{\eqref{eq:convex_00},\eqref{eq_main_ass}}{\leq} \| \m{w}_{t} - \m{w}_* \|^2 - 2\eta(1 - \eta L) [F(\m{w}_t) - F(\m{w}_*)] + 2\eta^2 p_{\epsilon}\epsilon + 2\eta^2 (1-p_{\epsilon}) M_{\epsilon}.
\end{align*}

Taking the expectation for both sides of the above equation yields
\begin{align*}
\mathbb{E} [ \|\m{w}_{t+1} - \m{w}_* \|^2] & \leq \mathbb{E} [\|
\m{w}_{t} - \m{w}_* \|^2] - 2\eta(1 - \eta L) \mathbb{E}
[F(\m{w}_t) - F(\m{w}_*)] + 2\eta^2 p_{\epsilon}\epsilon + 2\eta^2
(1-p_{\epsilon}) M_{\epsilon}.
\end{align*}

With $\eta < \frac{1}{L}$, one obtains
\begin{align*}
\mathbb{E} [F(\m{w}_t) - F(\m{w}_*)] & \leq \frac{1}{2\eta(1 - \eta
L)} \Big( \mathbb{E} [\| \m{w}_{t} - \m{w}_* \|^2] - \mathbb{E} [
\|\m{w}_{t+1} - \m{w}_* \|^2] \Big) \\ & \qquad + \frac{\eta}{(1 -
\eta L)} p_{\epsilon} \epsilon + \frac{\eta M_{\epsilon}}{(1 - \eta L)} (1 - p_{\epsilon}).
\end{align*}

By summing from $k = 0,\dots,t$ and averaging, we have
\begin{align*}
\frac{1}{t+1} \sum_{k=0}^t \mathbb{E} [F(\m{w}_k) - F(\m{w}_*)] &
\leq \frac{1}{2\eta(1 - \eta L)(t+1)} \| \m{w}_0 - \m{w}_* \|^2  +
\frac{\eta}{(1 - \eta L)} p_{\epsilon} \epsilon + \frac{\eta M_{\epsilon}}{(1 - \eta L)} (1 - p_{\epsilon}).
\end{align*}


\end{proof}

\section*{Proof of Theorem \ref{thm_nonconvex_05}}

\textbf{Theorem \ref{thm_nonconvex_05}}. \textit{Let Assumption \ref{ass_nonconvex_00} hold for some $N>0$. Suppose that $F$ is $L$-smooth. Consider Algorithm \ref{sgd_algorithm} with $\eta < \frac{1}{LN}$. Then, for any $\epsilon > 0$, we have
\begin{align*}
\frac{1}{t+1} \sum_{k=0}^t \mathbb{E}[ \| \nabla F(\m{w}_k) \|^2 ]
&\leq \frac{[F(\m{w}_0) - F^*]}{\eta\left(1 - L\eta N \right) (t + 1)} + \frac{L\eta}{ \left(1 - L\eta N \right)} \epsilon + \frac{L\eta M_{\epsilon}}{\left(1 - L\eta N \right)} ( 1 - p_{\epsilon}),
\end{align*}
where $F^*$ is any lower bound of $F$; and $p_{\epsilon}$  and $M_{\epsilon}$ are defined in \eqref{eq_Prob_p2} and \eqref{eq_upper_bound2} respectively.}

\begin{proof}
Let us assume that, there exists a local minima $\m{w}_*$ of $F(\m{w})$. We have
\begin{align*}
\mathbb{E}[F(\m{w}_{t+1}) | \mathcal{F}_t ] & = \mathbb{E}[F(\m{w}_t
- \eta \m{g}_t) | \mathcal{F}_t ]
 \overset{\eqref{eq:Lsmooth}}{\leq} F(\m{w}_t) - \eta \| \nabla F(\m{w}_t) \|^2 + \frac{L\eta^2}{2} \mathbb{E} \left[ \left \| \frac{1}{b} \sum_{i=1}^b \nabla f (\m{w}_t; \xi_{t,i}) \right \|^2 \Big | \mathcal{F}_t \right] \\
& \leq F(\m{w}_t) - \eta \| \nabla F(\m{w}_t) \|^2 + L\eta^2 \mathbb{E} \left[ \left \| \frac{1}{b} \sum_{i=1}^b ( \nabla f (\m{w}_t; \xi_{t,i}) - \nabla f(\m{w}_*; \xi_{t,i}) ) \right \|^2 \Big | \mathcal{F}_t \right] \\ & \qquad + L\eta^2 \mathbb{E} \left[ \left \| \frac{1}{b} \sum_{i=1}^b \nabla f (\m{w}_*; \xi_{t,i}) \right \|^2 \Big | \mathcal{F}_t \right] \\
& \leq F(\m{w}_t) - \eta \| \nabla F(\m{w}_t) \|^2 + L\eta^2 \mathbb{E} \left[ \left \| \frac{1}{b} \sum_{i=1}^b ( \nabla f (\m{w}_t; \xi_{t,i}) - \nabla f(\m{w}_*; \xi_{t,i}) ) \right \|^2 \Big | \mathcal{F}_t \right] \\ & \qquad + L\eta^2 \epsilon + L\eta^2 (1 - p_{\epsilon}) M_{\epsilon}.
\end{align*}

%
%

By summing from $k = 0,\dots,t$ and averaging, we have
\begin{align*}
\frac{1}{t+1} \sum_{k=0}^t \mathbb{E}[F(\m{w}_{k+1}) | \mathcal{F}_k ] & \leq \frac{1}{t+1} \sum_{k=0}^t F(\m{w}_k) - \eta \frac{1}{t+1} \sum_{k=0}^t \| \nabla F(\m{w}_k) \|^2 \\ & \qquad + L\eta^2 \frac{1}{t+1} \sum_{k=0}^t \left( \mathbb{E} \left[ \left \| \frac{1}{b} \sum_{i=1}^b ( \nabla f (\m{w}_k; \xi_{k,i}) - \nabla f(\m{w}_*; \xi_{k,i}) ) \right \|^2 \Big | \mathcal{F}_k \right] \right) \\ & \qquad + L\eta^2 \epsilon + L\eta^2 (1 - p_{\epsilon}) M_{\epsilon} \\
& \overset{\eqref{eq_ass_nonconvex_01}}{\leq} \frac{1}{t+1} \sum_{k=0}^t F(\m{w}_k) - \eta\left(1 -
L\eta N \right) \frac{1}{t+1} \sum_{k=0}^t \| \nabla F(\m{w}_k) \|^2 \\ & \qquad + L\eta^2 \epsilon + L\eta^2 (1 - p_{\epsilon}) M_{\epsilon}.
\end{align*}

Taking the expectation for the above equation, we have
\begin{align*}
\frac{1}{t+1} \sum_{k=0}^t \mathbb{E}[F(\m{w}_{k+1})] & \leq \frac{1}{t+1} \sum_{k=0}^t \mathbb{E}[F(\m{w}_{k})] - \eta\left(1 -
L\eta N \right) \frac{1}{t+1} \sum_{k=0}^t \mathbb{E}[ \| \nabla F(\m{w}_k) \|^2 ] \\ & \qquad + L\eta^2 \epsilon + L\eta^2 (1 - p_{\epsilon}) M_{\epsilon}.
\end{align*}

Hence, with $\eta < \frac{1}{L N}$, we have
\begin{align*}
\frac{1}{t+1} \sum_{k=0}^t \mathbb{E}[ \| \nabla F(\m{w}_k) \|^2 ]
& \leq \frac{\left[\mathbb{E}[F(\m{w}_{0})] - \mathbb{E}[F(\m{w}_{t+1})]\right]}{\eta\left(1 - L\eta N \right) (t + 1)}  + \frac{L\eta}{ \left(1 - L\eta N \right)} \epsilon + \frac{L\eta M_{\epsilon}}{ \left(1 - L\eta N \right)} ( 1 - p_{\epsilon}) \\
& \leq \frac{[F(\m{w}_0) - F^*]}{\eta\left(1 - L\eta N \right) (t + 1)}  + \frac{L\eta}{ \left(1 - L\eta N \right)} \epsilon + \frac{L\eta M_{\epsilon}}{\left(1 - L\eta N \right)} ( 1 - p_{\epsilon}),
\end{align*}
where $F^*$ is any lower bound of $F$.
\end{proof}

\section*{Proof of Corollary \ref{cor_sgd_str_convex_03_II}}

\textbf{Corollary \ref{cor_sgd_str_convex_03_II}}. \textit{For any $\epsilon$ such that $1 - p_{\epsilon} \leq  \epsilon$, and for Algorithm~\ref{sgd_algorithm} with $\eta \leq \frac{1}{2L}$, we have}
\begin{align*}
\mathbb{E} [ \|\m{w}_t - \m{w}_* \|^2 ] & \leq (1- \mu\eta)^t \|
\m{w}_{0} - \m{w}_* \|^2 + \frac{2\eta}{\mu}\left (1+ M_{\epsilon}\right )\epsilon.
\end{align*}
\textit{Furthermore if $t\geq T$ for $T =  \frac{1}{\mu \eta} \log \left( \frac{\mu\| \m{w}_{0} - \m{w}_* \|^2}{2\eta\left (1+ M_{\epsilon}\right )\epsilon} \right)$, then}
 \begin{align*}
 \mathbb{E} [
\|\m{w}_{t} - \m{w}_* \|^2 ] \leq \frac{4\eta}{ \mu}\left (1+M_{\epsilon}\right )  \epsilon .
\end{align*}

\begin{proof}
Taking the expectation, conditioning on $\mathcal{F}_t =
\sigma(\m{w}_0,\m{w}_1,\dots,\m{w}_t)$ to \eqref{eq_in_thm1}, we have
\begin{align*}
\mathbb{E} [ \|\m{w}_{t+1} - \m{w}_* \|^2  | \mathcal{F}_t ] & \leq  \| \m{w}_{t} - \m{w}_* \|^2 - 2\eta \nabla F(\m{w}_t)^\top (\m{w}_t - \m{w}_*) \\ & \qquad + 2 \eta^2 \mathbb{E}[ \| \nabla f(\m{w}_t; \xi_{t,1}) - \nabla f(\m{w}_*; \xi_{t,1}) \|] + 2 \eta^2 \mathbb{E} \left[ \left \| \frac{1}{b} \sum_{i=1}^b \nabla f(\m{w}_*; \xi_{t,i}) \right \|^2 \Big | \mathcal{F}_t \right] \\
& \overset{\eqref{eq:stronglyconvex_00},\eqref{basic_prop_02}}{\leq} (1 - \mu \eta)  \| \m{w}_{t} - \m{w}_* \|^2 - 2\eta (1 - 2\eta L) [ F(\m{w}_t) - F(\m{w}_*) ] \\ & \qquad + 2 \eta^2 \mathbb{E} \left[ \left \| \frac{1}{b} \sum_{i=1}^b \nabla f(\m{w}_*; \xi_{t,i}) \right \|^2 \right] \\
& \overset{\eta \leq \frac{1}{2L},\eqref{eq_main_ass}}{\leq} (1 - \mu \eta)  \| \m{w}_{t} - \m{w}_* \|^2 + 2\eta^2 p_{\epsilon} \epsilon + 2\eta^2 (1 - p_{\epsilon})M_{\epsilon}.
\end{align*}
The first inequality follows since $\{\xi_{i,i}\}_{i=1}^b$ are i.i.d. random variables. Hence, we have
\begin{align*}
\mathbb{E} [ \|\m{w}_{t+1} - \m{w}_* \|^2] \leq (1- \mu\eta )^{t+1}
\| \m{w}_{0} - \m{w}_* \|^2  + \frac{2\eta }{\mu} p_{\epsilon} \epsilon +
\frac{2\eta}{\mu} (1-p_{\epsilon})M_{\epsilon}.
\end{align*}

Therefore,
\begin{align*}
\mathbb{E} [ \|\m{w}_{t} - \m{w}_* \|^2] & \leq (1- \mu\eta )^t
 \| \m{w}_{0} - \m{w}_* \|^2  + \frac{2\eta }{ \mu} p_{\epsilon} \epsilon +
\frac{2\eta}{ \mu} (1-p_{\epsilon})M_{\epsilon} \\
& \leq (1- \mu\eta )^t
 \| \m{w}_{0} - \m{w}_* \|^2 + \frac{2\eta }{ \mu}(1 + M_{\epsilon})\epsilon,
\end{align*}
where the last inequality follows since $1-p_{\epsilon} \leq \epsilon$.

First, we would like to find a $T$ such that
\begin{align*}
(1- \mu\eta)^T \| \m{w}_{0} - \m{w}_* \|^2 = \frac{2\eta}{\mu}\left (1+ M_{\epsilon}\right )\epsilon.
\end{align*}

Taking $\log$ for both sides, we have
\begin{align*}
T \log(1 - \mu \eta) + \log\left( \| \m{w}_{0} - \m{w}_* \|^2 \right) = \log \left( \frac{2\eta}{\mu}\left (1+ M_{\epsilon} \right )\epsilon \right).
\end{align*}

Hence,
\begin{align*}
T & = - \frac{1}{\log(1 - \mu \eta)} \log \left( \frac{\mu\| \m{w}_{0} - \m{w}_* \|^2}{2\eta\left (1+ M_{\epsilon}\right )\epsilon} \right) \leq \frac{1}{\mu \eta} \log \left( \frac{\mu\| \m{w}_{0} - \m{w}_* \|^2}{2\eta\left (1+ M_{\epsilon}\right )\epsilon} \right),
\end{align*}
where the last inequality follows since $-1/\log(1-x) \leq 1/x$ for $0 < x \leq 1$. Hence, if $t \geq T$ for $T =  \frac{1}{\mu \eta} \log \left( \frac{\mu\| \m{w}_{0} - \m{w}_* \|^2}{2\eta\left (1+ M_{\epsilon}\right )\epsilon} \right)$, then
\begin{align*}
\mathbb{E} [ \|\m{w}_{t} - \m{w}_* \|^2 ] \leq \frac{2\eta}{\mu}\left (1+M_{\epsilon}\right )  \epsilon + \frac{2\eta}{ \mu}\left (1+M_{\epsilon}\right ) \epsilon = \frac{4\eta}{ \mu}\left (1+M_{\epsilon}\right ) \epsilon.
\end{align*}
\end{proof}
%
%

\section*{Proof of Corollary \ref{cor_sgd_convex_01}}

\textbf{Corollary \ref{cor_sgd_convex_01}}. \textit{If $f(\m{w};\xi)$ is $L$-smooth and convex for every realization of $\xi$, then for any $\epsilon$ such  that $1 - p_{\epsilon} \leq  \epsilon$,  and $\eta \leq
\frac{1}{2L}$, it holds that}
\begin{align*}
\frac{1}{t+1} \sum_{k=0}^t \mathbb{E} [F(\m{w}_k) - F(\m{w}_*)] &
\leq \frac{\| \m{w}_0 - \m{w}_* \|^2}{\eta t}  +
2\eta \left(1+ M_{\epsilon}  \right) \epsilon.
\end{align*}
\textit{Hence, if $t\geq T$ for  $T = \frac{ \| \m{w}_0 - \m{w}_* \|^2}{(2\eta^2) (1+M_{\epsilon})\epsilon}$, we have}
\begin{align*}
\frac{1}{t+1} \sum_{k=0}^t \mathbb{E} [F(\m{w}_k) - F(\m{w}_*)] \leq
4\eta \left(1+M_{\epsilon}\right) \epsilon.
\end{align*}

\begin{proof}
By Theorem \ref{thm_sgd_convex_01}, with $\eta \leq \frac{1}{2L}$, we have
\begin{align*}
\frac{1}{t+1} \sum_{k=0}^t \mathbb{E} [F(\m{w}_k) - F(\m{w}_*)] &
\leq \frac{\| \m{w}_0 - \m{w}_* \|^2}{2\eta(1 - \eta L) t}  +
\frac{\eta}{(1 - \eta L)} p_{\epsilon} \epsilon + \frac{\eta M_{\epsilon}}{(1 - \eta L)} (1 - p_{\epsilon}) \\
& \leq \frac{2 \| \m{w}_0 - \m{w}_* \|^2}{2\eta t}  +
2\eta p_{\epsilon} \epsilon + 2\eta M_{\epsilon} (1 - p_{\epsilon}) \\
& \leq \frac{ \| \m{w}_0 - \m{w}_* \|^2}{\eta t} + 2\eta(1 + M_{\epsilon})\epsilon.
\end{align*}

Similar to the proof of Corollary \ref{cor_sgd_str_convex_03_II}, we want to find a $T$ such that
\begin{align*}
\frac{\| \m{w}_0 - \m{w}_* \|^2}{\eta T}  = 2\eta(1 + M_{\epsilon})\epsilon.
\end{align*}

It is easy to see that if $t \geq T$ for $T = \frac{ \| \m{w}_0 - \m{w}_* \|^2}{(2\eta^2)(1 + M_{\epsilon})\epsilon}$, then
\begin{align*}
\frac{1}{t+1} \sum_{k=0}^t \mathbb{E} [F(\m{w}_k) - F(\m{w}_*)] \leq
2\eta(1 + M_{\epsilon})\epsilon + 2\eta(1 + M_{\epsilon})\epsilon = 4\eta(1 + M_{\epsilon})\epsilon.
\end{align*}
\end{proof}

\section*{Proof of Corollary \ref{cor_nonconvex_05}}

\textbf{Corollary \ref{cor_nonconvex_05}}. \textit{Let Assumption \ref{ass_nonconvex_00} hold and $p_{\epsilon}$ and $M_{\epsilon}$ be defined as in  \eqref{eq_Prob_p2} and \eqref{eq_upper_bound2}.  For any $\epsilon$ such that $1 - p_{\epsilon} \leq  \epsilon$, and for  $\eta \leq \frac{1}{2L N}$, we have}
\begin{align*}
\frac{1}{t+1} \sum_{k=0}^t \mathbb{E}[ \| \nabla F(\m{w}_k) \|^2 ]
&\leq \frac{2[F(\m{w}_0) - F^*]}{\eta (t + 1)} + 2 L\eta (1 + M_{\epsilon}) \epsilon.
\end{align*}
\textit{Hence, if $t \geq T$ for $T = \frac{[F(\m{w}_0) - F^*]}{(L\eta^2)(1 + M_{\epsilon})\epsilon}$, we have}
\begin{align*}
\frac{1}{t+1} \sum_{k=0}^t \mathbb{E}[ \| \nabla F(\m{w}_k) \|^2 ]
&\leq 4 L\eta (1 + M_{\epsilon}) \epsilon.
\end{align*}

\begin{proof}
By Theorem \ref{thm_nonconvex_05}, with $\eta \leq \frac{1}{2LN}$, we have
\begin{align*}
\frac{1}{t+1} \sum_{k=0}^t \mathbb{E}[ \| \nabla F(\m{w}_k) \|^2 ]
& \leq \frac{[F(\m{w}_0) - F^*]}{\eta\left(1 - L\eta N \right) (t + 1)}  + \frac{L\eta}{ \left(1 - L\eta N \right)} \epsilon + \frac{L\eta M_{\epsilon}}{\left(1 - L\eta N \right)} ( 1 - p_{\epsilon}) \\
& \leq \frac{2[F(\m{w}_0) - F^*]}{\eta (t + 1)}  + 2 L\eta \epsilon + 2 L\eta M_{\epsilon} ( 1 - p_{\epsilon}) \\
& \leq \frac{2[F(\m{w}_0) - F^*]}{\eta (t + 1)}  + 2 L\eta(1 + M_{\epsilon}) \epsilon.
\end{align*}

Similar to the proof of Corollaries \ref{cor_sgd_str_convex_03_II} and \ref{cor_sgd_convex_01}, we want to find a $T$ such that
\begin{align*}
\frac{2[F(\m{w}_0) - F^*]}{\eta T} = 2 L\eta (1 + M_{\epsilon}) \epsilon.
\end{align*}

It is easy to see that if $t \geq T$ for $T = \frac{[F(\m{w}_0) - F^*]}{(L\eta^2)(1 + M_{\epsilon})\epsilon}$, then
\begin{align*}
\frac{1}{t+1} \sum_{k=0}^t \mathbb{E}[ \| \nabla F(\m{w}_k) \|^2 ]
&\leq 2 L\eta (1 + M_{\epsilon}) \epsilon + 2 L\eta (1 + M_{\epsilon}) \epsilon = 4 L\eta (1 + M_{\epsilon}) \epsilon.
\end{align*}
\end{proof}
\end{document}